\crefname{algorithm}{Alg.}{Algs.}
\crefname{appendix}{Appx.}{Appxs.}
\crefname{example}{Example}{Examples}
\crefname{equation}{}{}        
\Crefname{equation}{Eq.}{Eqs.} 
\crefname{figure}{Fig.}{Figs.}
\crefname{proposition}{Prop.}{Props.}
\crefname{corollary}{Corr.}{Corrs.}
\crefname{section}{Sec.}{Secs.}
\crefname{table}{Tbl.}{Tables}
\declaretheorem[style=plain,numberwithin=section]{theorem}
\declaretheorem[style=plain,sibling=theorem]{corollary}
\declaretheorem[style=plain,sibling=theorem]{proposition}
\declaretheorem[style=definition,sibling=theorem,qed={\lower-0.3ex\hbox{$\triangleleft$}}]{example}
\newcommand{\defas}{\coloneqq}
\newcommand{\diff}{\mathop{}\!\mathrm{d}}
\newcommand{\set}[1]{\lbrace{#1}\rbrace}
\DeclareMathOperator{\SExp}{\mathbb{E}}
\newcommand{\E}[1]{\SExp\left[#1\right]}
\renewcommand{\H}[1]{H(#1)}
\DeclareMathOperator{\SVar}{\mathrm{Var}}
\newcommand{\V}[1]{\SVar\left[#1\right]}
\DeclareMathOperator{\SKL}{D_\mathrm{KL}}
\newcommand{\KL}[2]{\SKL\left[#1 {\mid\mid} #2  \right]}
\newcounter{algorithmicH}
\let\oldalgorithmic\algorithmic
\renewcommand{\algorithmic}{%
  \stepcounter{algorithmicH}
  \oldalgorithmic}
\renewcommand{\theHALG@line}{ALG@line.\thealgorithmicH.\arabic{ALG@line}}
\begin{document}

%
\runningtitle{Estimators of Entropy and Information via Inference in Probabilistic Models}

%
\twocolumn[
  \aistatstitle{Estimators of Entropy and Information via Inference \\ in Probabilistic Models}
  \aistatsauthor{Feras A.~Saad \And Marco Cusumano-Towner \And Vikash K.~Mansinghka }
  \aistatsaddress{MIT \And MIT \And MIT }
]

\begin{abstract}
Estimating information-theoretic quantities such as entropy and mutual
information is central to many problems in statistics and machine
learning, but challenging in high dimensions.
This paper presents \textit{estimators of entropy via inference} (EEVI),
which deliver upper and lower bounds on many information quantities for arbitrary
variables in a probabilistic generative model.
These estimators use importance sampling
with proposal distribution families that include amortized variational
inference and sequential Monte Carlo, which can be tailored to the
target model and used to squeeze true information values with high accuracy.
We present several theoretical properties of EEVI
and demonstrate scalability and efficacy on two problems from the medical domain:
\begin{enumerate*}[label=(\roman*)]
\item in an expert system for diagnosing liver disorders, we rank
medical tests according to how informative they are about latent
diseases, given a pattern of observed symptoms and patient attributes;
and
\item in a differential equation model of carbohydrate metabolism, we find
optimal times to take blood glucose measurements that maximize
information about a diabetic patient's insulin sensitivity, given
their meal and medication schedule.
\end{enumerate*}
\end{abstract}

\section{INTRODUCTION}

This paper studies the fundamental problem of estimating the Shannon entropy
$H(Y) \defas -\mathbb{E}[\log p(Y)]$ of a random element $Y$,
in situations where its marginal distribution involves an intractable
multidimensional integral over a known joint probability distribution:
\begin{align}
p(y) = \int_{\mathcal{X}}p(x,y) \diff{x} && (y \in \mathcal{Y}).
\label{eq:marginal-py}
\end{align}
In~\cref{eq:marginal-py}, the term $p(x,y)$ refers to a probabilistic generative
model that can be sampled from and whose joint density can be computed pointwise,
as is common in a broad class of
probabilistic systems that includes Bayesian networks~\citep{pearl1988},
deep generative models~\citep{kingma2019}, and generative
probabilistic programs~\citep{wingate2011}.
In this setting, a key challenge is that the expression
\begin{align}
\H{Y} = -\int_{\mathcal{Y}} \log\left[\int_{\mathcal{X}} p(x,y)\diff{x}\right] p(y) \diff{y}
\label{eq:entropy-double-integral}
\end{align}
contains an intractable integral inside the logarithm, which rules out
the unbiased simple Monte Carlo estimator $-1/n\sum_{i=1}^n \log p(Y_i)$
(for $Y_i \sim p(y)$, $1\,{\le}\,i\,{\le}\,n$).

To address these challenges, we develop a new class of
\textit{estimators of entropy via inference} (EEVI) that return
interval estimates of doubly intractable entropies as
in~\cref{eq:entropy-double-integral}.
EEVI uses auxiliary-variable importance sampling
constructs similar to those from pseudo-marginal
methods~\citep{andrieu2009} to first compute unbiased estimates of the
intractable quantities $p(y)$ and $1/p(y)$ for the inner integral.
Under the log transform, these estimates become lower
and upper bounds of $\log p(y)$, which are then
embedded in a simple Monte Carlo estimator for the outer integral
to form an interval estimate of $\H{Y}$.
In the limit of computation, the interval width can be driven to zero,
squeezing the true entropy value at a rate that depends on the
quality of the importance sampling proposal.

Our contribution is a family of entropy estimators that
\begin{enumerate}[label=(C\arabic*),topsep=0pt, leftmargin=*, itemsep=0pt, partopsep=0pt]
\item Apply to arbitrary random variables
  in any probability distribution that can be sampled from and whose
  full joint density is tractable; no marginals or conditionals need
  to be tractable.

\item Guarantee upper and lower bounds in expectation, which can
  be composed (\cref{fig:derived}) to squeeze many
  information quantities, e.g., \crefrange{eq:measure-ce}{eq:measure-dcor}.

\item Leverage a broad family of proposal distributions
  that includes both variational and Monte Carlo inference
  for increasing accuracy as a function of computational effort.
\end{enumerate}

The rest of the paper is organized as follows:
\cref{sec:overview} gives an overview of EEVI and
  explains how interval estimates of entropy
  can be composed to form interval estimates of several other information
  quantities, such as conditional mutual information and
  interaction information.
\cref{sec:aux-var}
  presents theoretical properties of importance sampling-based
  estimators of log marginal probabilities of the form given in~\cref{eq:marginal-py},
  and gives examples of inference-based
  variational and Monte Carlo auxiliary-variable proposals
  to deliver accurate upper and lower bounds.
\cref{sec:applications} illustrates the scalability and efficacy of
  EEVI for two optimal design tasks in a
  probabilistic expert system for diagnosing liver disorders and a
  dynamic model of carbohydrate metabolism in diabetic patients.
\cref{sec:related} discusses related work.

\begin{figure*}
\FrameSep0pt
\hrule
\definecolor{mplBlue}{HTML}{1F77B4}
\definecolor{mplRed}{HTML}{D62728}
\newcommand{\LB}[1]{\textcolor{mplBlue}{#1}}
\newcommand{\UB}[1]{\textcolor{mplRed}{#1}}
\newcommand{\tiksuper}[1]{\tikz{\node[draw=black,circle,inner sep=1pt,outer sep=0pt,font=\tiny](){#1};}}
\begin{subfigure}[b]{.35\linewidth}
\centering
\begin{tikzpicture}
\node[name=xtl, coordinate] at (0,0) {};
\node[name=xtr, coordinate, right=1 of xtl] {};
\node[name=xbl, coordinate, below = 2 of xtl] {};
\node[name=xbr, coordinate] at (xbl -| xtr) {};

\node[name=xml, coordinate] at ($(xtl)!.6!(xbl)$) {};
\node[name=xmr, coordinate] at (xml -| xtr) {};

\draw[line width=1pt, color=mplRed] (xtl) -- (xtr);
\draw[line width=1pt, color=mplBlue] (xbl) -- (xbr);
\draw[line width=1pt, color=black] (xml) -- (xmr);

\node[color=black, left=0 of xtl, anchor=east,
  label={[font=\footnotesize,yshift=4pt,anchor=north]below:{(\cref{alg:entropy-upper-general})}}]
  {$\mathbb{E}[\UB{\hat{H}_Y}]$};

\node[color=black, left=0 of xbl, anchor=east,
  label={[font=\footnotesize,yshift=4pt,anchor=north]below:{(\cref{alg:entropy-lower-general})}}]
  {$\mathbb{E}[\LB{\check{H}_Y}]$};

\node[color=black, left=0 of xml, anchor=east]
  {$H(Y)$};

\draw[stealth-stealth] ($(xml)!.5!(xmr)$) -- ($(xtl)!.5!(xtr)$);
\draw[stealth-stealth] ($(xml)!.5!(xmr)$) -- ($(xbl)!.5!(xbr)$);

\draw[decorate, decoration={brace,amplitude=10pt,mirror,raise=2pt}]
  ($(xml)!.5!(xmr)$) -- ($(xtl)!.5!(xtr)$)
  node[pos=.5, anchor=west,xshift=15pt, draw=none, inner sep=0pt] {\footnotesize \Cref{eq:gap-aux-upper}};
\draw[decorate, decoration={brace,amplitude=10pt,mirror,raise=2pt}]
  ($(xbl)!.5!(xbr)$) -- ($(xml)!.5!(xmr)$)
  node[pos=.5, anchor=west,xshift=15pt, draw=none, inner sep=0pt] {\footnotesize \Cref{eq:gap-aux-lower}};
\end{tikzpicture}
\captionsetup{skip=2pt}
\caption{Interval estimate of entropy.}
\end{subfigure}\hfill
\begin{subfigure}[b]{.65\linewidth}
\footnotesize
\setlength{\abovedisplayskip}{0pt}
\setlength{\belowdisplayskip}{0pt}
\setlength{\abovedisplayshortskip}{0pt}
\setlength{\belowdisplayshortskip}{0pt}
\begin{align*}
\textrm{\footnotesize joint distribution} &\quad \textrm{\footnotesize subsets of variables} \span\span\span\span\\[-4pt]
p(z_1, \dots, z_d) &\quad  A_0, A_1,A_2 \subset \set{1,\dots,d} \span\span\span\span\\[-2pt]
\left\lbrace Y \defas A_0; X \defas [d] \setminus Y \right\rbrace
  &\xrightarrow{\text{EEVI}}
  \big[\, ^{\tiksuper{1}}\LB{\check{H}_{A_0}} \hspace{-.25cm}&&,\hspace{.25cm} ^{\tiksuper{2}}\UB{\hat{H}_{A_0}}\,\big]
  \\[-4pt]
\left\lbrace Y \defas A_0 \cup A_1; X \defas [d] \setminus Y \right\rbrace
  &\xrightarrow{\text{EEVI}}
  \big[\, ^{\tiksuper{3}}\LB{\check{H}_{A_0 \cup A_1}} \hspace{-.25cm}&&,\hspace{.25cm} ^{\tiksuper{4}}\UB{\hat{H}_{A_0 \cup A_1}} \,\big]
  \\[-4pt]
\left\lbrace Y \defas A_0 \cup A_2; X \defas [d] \setminus Y \right\rbrace
  &\xrightarrow{\text{EEVI}}
  \big[\, ^{\tiksuper{5}}\LB{\check{H}_{A_0 \cup A_2}} \hspace{-.25cm}&&,\hspace{.25cm} ^{\tiksuper{6}}\UB{\hat{H}_{A_0 \cup A_2}} \,\big]
  \\[-4pt]
\left\lbrace Y \defas A_0 \cup A_1 \cup A_2; X \defas [d] \setminus Y \right\rbrace
  &\xrightarrow{\text{EEVI}}
  \big[\, ^{\tiksuper{7}}\LB{\check{H}_{A_0 \cup A_1 \cup A_2}} \hspace{-.25cm}&&,\hspace{.25cm} ^{\tiksuper{8}}\UB{\hat{H}_{A_0 \cup A_1 \cup A_2}} \,\big]
\end{align*}
\captionsetup{skip=0pt}
\caption{Interval estimates of entropy for four marginal distributions of $p$.}
\label{fig:derived-marginals}
\end{subfigure}
\hrule
\begin{subfigure}{\linewidth}
\centering
\begin{tikzpicture}
\node[name=xtl, coordinate] at (0,0) {};
\node[name=xtr, coordinate, right=1 of xtl] {};
\node[name=xbl, coordinate, below = 1.5 of xtl] {};
\node[name=xbr, coordinate] at (xbl -| xtr) {};

\node[name=xml, coordinate] at ($(xtl)!.4!(xbl)$) {};
\node[name=xmr, coordinate] at (xml -| xtr) {};

\draw[line width=1pt, color=mplRed] (xtl) -- (xtr);
\draw[line width=1pt, color=mplBlue] (xbl) -- (xbr);
\draw[line width=1pt, color=black] (xml) -- (xmr);

\draw[stealth-stealth] ($(xml)!.5!(xmr)$) -- ($(xtl)!.5!(xtr)$);
\draw[stealth-stealth] ($(xml)!.5!(xmr)$) -- ($(xbl)!.5!(xbr)$);
\node[name=IUB, color=black, left=0 of xtl, anchor=east]{$\mathbb{E}[\UB{\hat{I}_{A_1:A_2 \mid A_0}}]$};
\node[name=IEx,color=black, left=0 of xml, anchor=east]{$I(A_1: A_2 \mid A_0)$};
\node[name=ILB, color=black, left=0 of xbl, anchor=east]{$\mathbb{E}[\LB{\check{I}_{A_1:A_2 \mid A_0}}]$};

\node[name=IUBeq, color=black, left=2 of IUB] {
$\UB{\hat{I}_{A_1:A_2 \mid A_0}}
  = {^{\tiksuper{4}}\UB{\hat{H}_{A_0 \cup A_1}}}
  + {^{\tiksuper{6}}\UB{\hat{H}_{A_0 \cup A_2}}}
  - {^{\tiksuper{7}}\LB{\check{H}_{A_0 \cup A_1 \cup A_2}}}
  - {^{\tiksuper{1}}\LB{\check{H}_{A_0}}}$};
\draw[-stealth] (IUBeq) -- (IUB);

\node[name=ILBeq, color=black, left=2 of ILB] {
$\LB{\check{I}_{A_1:A_2\mid A_0}}
  = {^{\tiksuper{3}}\LB{\check{H}_{A_0 \cup A_1}}}
  + {^{\tiksuper{5}}\LB{\check{H}_{A_0 \cup A_2}}}
  - {^{\tiksuper{8}}\UB{\hat{H}_{A_0 \cup A_1 \cup A_2}}}
  - {^{\tiksuper{2}}\UB{\hat{H}_{A_0}}}$};

\draw[-stealth] (IUBeq) -- (IUB);
\draw[-stealth] (ILBeq) -- (ILB);
\end{tikzpicture}
\captionsetup{skip=2pt}
\caption{Interval estimate of conditional mutual information derived from the interval estimates of entropy in~\subref{fig:derived-marginals}.}
\hrule
\label{fig:derived-cmi}
\end{subfigure}
\captionsetup{skip=5pt, belowskip=-5pt}
\caption{Composing interval estimators of entropy to obtain bounds on derived information measures.}
\label{fig:derived}
\end{figure*}

\section{OVERVIEW OF EEVI}
\label{sec:overview}

Suppose that $p(z_1, \dots, z_d)$ is a $d$-dimensional probability
density (with respect to an appropriate $\sigma$-finite measure)
such that it is possible to sample $(Z_1,\dots,Z_d) \sim p(z_1,\dots,z_d)$
and evaluate density values pointwise.
Let $A \subset \set{1,\dots,d}$ be a subset of indexes and let $Y \defas \set{Z_i, i \in A}$
and $X \defas \set{Z_i, i \notin A}$ be the corresponding partition of
variables in $Z$.
We aim to estimate the marginal entropy $\H{Y}$ as defined
in~\cref{eq:entropy-double-integral}, where $\mathcal{Y}$ and
$\mathcal{X}$ are the sets in which $Y$ and $X$ take values, respectively.
As the partition $A$ is arbitrary, neither the marginal densities
$p(x)$ and $p(y)$ nor conditional densities $p(x\,{\mid}\,y)$ and
$p(y\,{\mid}\,x)$ are necessarily tractable, posing a key
challenge for estimating $\H{Y}$.

Suppose momentarily that we can compute two
measurable real functions $w, w': \mathcal{U} \times \mathcal{Y} \to \mathbb{R}$
such that for some random variables $U, U'$ taking values in a set $\mathcal{U}$
and all $y \in \mathcal{Y}$ except for a $p$-measure zero set,
we have
\begin{align}
\E{w(U, y)} = p(y) && \E{w'(U', y)} = 1/p(y).
\label{eq:expect-lu}
\end{align}
If $w$ and $w'$ are nonnegative
almost everywhere, then concavity of $\log$ and
Jensen's inequality gives bounds
\begin{align}
\E{\log w(U, y)} &\le \log \E{w(U, y)} = \log p(y), \label{eq:ebound-lower}\\
\E{\log w'(U', y)}   &\le \log \E{w'(U',y)} = -\log p(y), \label{eq:ebound-upper}
\end{align}
which together imply that
\begin{align}
\E{\log w(U,y)} \le \log p(y) \le \E{-\log w'(U',y)}.
\label{eq:logpy-sandwich}
\end{align}
If the real functions $y \mapsto \E{\log w(U, y)}$ and
$y \mapsto -\E{\log w'(U', y)}$ defined on $\mathcal{Y}$ are themselves both
measurable
then monotonicity of expectation gives
\begin{flalign}
\E{\log w(U,Y)} \le \E{\log p(Y)} \le \E{-\log w'(U', Y)} \hspace{-1.5cm} &&
\label{eq:logpY-sandwich}
\end{flalign}
The two expectations in \eqref{eq:logpY-sandwich} that squeeze $\E{\log p(Y)}$
can now be estimated via unbiased Monte Carlo
\begin{align}
\mathcal{L}_{n,m} &\defas
  \frac{1}{n} \sum_{i=1}^n \left[ \frac{1}{m} \sum_{j=1}^m \log w(U_{ij}, Y_i) \right]
  \label{eq:Hl-generic}, \\
\mathcal{T}_{n,m} &\defas
  \frac{1}{n} \sum_{i=1}^n \left[ \frac{1}{m} \sum_{j=1}^m -\log w'(U'_{ij}, Y'_i) \right]
  \label{eq:Hu-generic},
\end{align}
where $Y_i, Y'_i$ are identically distributed to $Y$; $U_{ij}$ identically
to $U$; and $U'_{ij}$ identically to $U'$ $(i=1,\dots,n; j=1,\dots,m)$.
Letting $\check{H}_Y \defas -\mathcal{T}_{n,m}$ and $\hat{H}_Y \defas -\mathcal{L}_{n,m}$,
\cref{eq:logpY-sandwich} implies that
the means of $\check{H}_Y$ and $\hat{H}_Y$ satisfy
\begin{align}
\mathbb{E}[\check{H}_Y] \le \H{Y} \le \mathbb{E}[\hat{H}_Y].
\label{eq:Hy-sandwich}
\end{align}
If using i.i.d.\ samples in~\cref{eq:Hl-generic,eq:Hu-generic},
under mild conditions the central limit theorem and \cref{eq:Hy-sandwich}
imply the interval estimator $[\check{H}_Y, \hat{H}_Y]$ has coverage probability
\begin{align*}
\Pr[ \check{H}_Y \le H(Y) \le \hat{H}_Y] \approx
  \Phi\left(\sqrt{t}\check{B}/{\check{\sigma}}\right)
  \Phi\left(\sqrt{t}\hat{B}/\hat{\sigma}\right),
\end{align*}
where $\Phi$ is the standard normal CDF; $t = nm$;
$\check{B} \defas \E{-\log w'(U',Y)} - \E{\log p(Y)}$
and
$\hat{B} \defas \E{\log p(Y)} - \E{\log w(U,Y)}$
are the biases in~\cref{eq:Hy-sandwich};
and $\check{\sigma}$ and $\hat{\sigma}$ are the standard deviations
of $\log w'(U',Y)$ and $\log w(U, Y)$.
So far we have assumed access to functions $w$ and $w'$ that
satisfy~\cref{eq:expect-lu}: \cref{sec:aux-var} shows how they can be
constructed via importance sampling.

\subsection{Extending entropy bounds to additional information-theoretic quantities}
\label{sec:overview-derived}

The lower and upper bounds on entropy in~\cref{eq:Hy-sandwich} can be composed to bound
several derived information-theoretic
quantities that measure the degree of relationship between arbitrary
subcollections of variables in a model, possibly conditioned on others
(\cref{fig:derived}).
Letting $\H{A} \defas \H{\set{Z_i, i \in A}}$ for $A \subset [d]$,
by adding and subtracting upper and lower bounds on $\H{A}$ we
can also build interval estimators of:
\begin{flalign}
&\,\scalebox{.8}{$\bullet$}\, \textit{conditional entropy}~\textrm{\citep{shannon1948}} && \notag \\
&\, \H{{A_1} \mid {A_2}} \defas \H{A_1 \cup A_2} - \H{A_2} \label{eq:measure-ce} \\
&\,\scalebox{.8}{$\bullet$}\, \textit{conditional mutual information}~\textrm{\citep{shannon1948}}  && \notag \\
&\, I({A_1} : {A_2} \mid {A_0}) \defas \H{A_1 \mid A_0}  - \H{A_1 \mid A_2, A_0}  \label{eq:measure-cmi} \hspace{-1cm} \\
&\,\scalebox{.8}{$\bullet$}\, \textit{conditional total correlation}~\textrm{\citep{watanbe1960}}  && \notag \\
&\, \begin{aligned}[t]
  C(\set{A_i}_{i=1}^n {\mid} A_0) &\defas \textstyle\sum\limits_{i=1}^{n}\H{A_i {\mid} A_0}
  - \H{\mathop{\cup}\limits_{i=1}^{n}A_i \mid A_0} \label{eq:measure-tcor}
\end{aligned} \hspace{-1cm}\\
&\,\scalebox{.8}{$\bullet$}\, \textit{conditional interaction information}~\textrm{\citep{ting1962}} && \notag \\
&\, T(\set{A_i}_{i=1}^n \mid A_0)
  \defas \textstyle\sum_{S{\subset}[n]} -1^{|S|} \H{\cup_{i \in S}A_i\,{\mid}\,A_0} \hspace{-1cm}
  \label{eq:measure-intinf} \\
&\,\scalebox{.8}{$\bullet$}\, \textit{conditional dual correlation}~\textrm{\citep{han1978}}  && \notag \\
&\, \begin{aligned}[t]
  D(\set{A_i}_{i=1}^n \mid A_0) &\defas H(\cup_{i=1}^nA_i \mid A_0) \\
  &- \textstyle\sum_{i=1}^n H(A_i \mid \cup_{\substack{j=0, j\ne i}}^nA_i)
  \label{eq:measure-dcor}
\end{aligned}
\end{flalign}
\section{SAMPLING BOUNDS ON LOG MARGINAL PROBABILITIES}
\label{sec:aux-var}

\textbf{Importance sampling in log space.}
Implementing the estimators $\hat{H}_Y, \check{H}_Y$ in~\cref{eq:Hy-sandwich} requires
functions $w, w'$ that satisfy~\cref{eq:expect-lu}.
Our starting point is an identity from importance sampling.
Let $h$ and $g$ be two probability densities on a common set $\mathcal{X}$
such that $h$ is absolutely continuous with respect to $g$ (written $h \ll g$);
i.e., $\int_A g(x) \diff{x} = 0 \implies \int_A h(x) \diff{x} = 0$ for all
measurable $A$.
Suppose
$h(x) = \tilde{h}(x)/Z_h$, $g(x) = \tilde{g}(x)/Z_g$
are only known up to normalizing constants.
Then, for $X\sim g$,
\begin{align}
\E{\tilde{h}(X)/\tilde{g}(X)} = Z_h/Z_g.
\label{eq:is-ratio-unbisaed}
\end{align}
(All proofs in~\cref{appx:proofs}.)
Under log transform, the ratio in~\cref{eq:is-ratio-unbisaed}
is a lower bound on $\log(Z_h/Z_g)$ in expectation with a gap equal to
the KL divergence from $h$ to $g$:
\begin{flalign}
\E{ \log\left({\tilde{h}(X)}/{\tilde{g}(X)} \right) }
  = \log \left({Z_h}/{Z_g}\right) - \KL{g}{h}\hspace{-.5ex}.
  \hspace{-1cm} &&
\label{eq:is-log-ratio-biased}
\end{flalign}
\Cref{eq:is-log-ratio-biased} does not require $h \ll g$.
However, the expectation is well-defined only if $g \ll h$
and is finite only if $\KL{g}{h} < \infty$.
Moreover, the variance
\begin{align}
\begin{aligned}[m]
&\mathrm{Var}\left[\log(\tilde{h}(X)/\tilde{g}(X)) \right] \\
&\qquad= \E{ \log^2(h(X)/g(X)) } - (\KL{g}{h})^2
\end{aligned}
\label{eq:is-log-ratio-var}
\end{align}
is finite only if $\KL{g}{h} < \infty$ and $\log^2(h(X)/g(X))$ has finite expectation.
Applying Markov's inequality to~\cref{eq:is-ratio-unbisaed} gives a right tail bound
for $\log \tilde{h}(X)/\tilde{g}(X)$:
\begin{align}
&\Pr\left[\tilde{h}(X)/\tilde{g}(X) \ge e^t(Z_h/Z_g)\right] \le e^{-t} \label{eq:is-log-ratio-tail} \\
&\implies \Pr\left[\log(\tilde{h}(X)/\tilde{g}(X)) \ge t + \log(Z_h/Z_g)\right] \le e^{-t}, \notag
\end{align}
for any $t > 0$.
The mean absolute deviation satisfies
\begin{align}
\E{\left\lvert \log(\tilde{h}(X)/\tilde{g}(X)) - \mu \right\rvert}
  \le 2 + 2\KL{g}{h},
\label{eq:is-log-ratio-ead}
\end{align}
where $\mu \defas \mathbb{E}[\log(\tilde{h}(X)/\tilde{g}(X))]$;
i.e., it is upper bounded by two plus twice the
bias in~\cref{eq:is-log-ratio-biased}, which decreases as $g$
more closely matches $h$.

%
\begin{figure}[t]
\setlength{\intextsep}{0pt}

\centering
\def\colory{none}
\def\colorx{none}
\def\colorv{none}
\captionsetup[subfigure]{font=normal}
\begin{subfigure}[b]{.3\linewidth}
\centering
\begin{tikzpicture}
\node[draw=black, fill=\colory, circle] (y) {$y$};
\node[draw=black, fill=\colorx, circle, left= .5 of y] (x) {$x$};
\draw[-] (x) -- (y);
\end{tikzpicture}
\caption{\begin{tabular}[t]{@{}c}
  Target \\ $p(x, y)$
\end{tabular}}
\label{fig:distributions-aux-target}
\end{subfigure}
\begin{subfigure}[b]{.3\linewidth}
\centering
\begin{tikzpicture}
\node[name=y, draw=black, fill=black, label={right:$y$}] {};
\node[name=x, draw=black, fill=\colorx, circle, left= .5 of y] {$x$};
\node[name=v, draw=black, fill=\colorv, circle, above= .5 of x] {$v$};
\draw[-stealth] (y) -- (x);
\draw[-] (v) -- (x);
\draw[-stealth] (y) |- (v.east);
\end{tikzpicture}
\caption{\begin{tabular}[t]{@{}c}
  Proposal \\ $q(v, x; y)$
\end{tabular}}
\label{fig:distributions-proposal}
\end{subfigure}
\begin{subfigure}[b]{.35\linewidth}
\centering
\begin{tikzpicture}
\node[name=y, draw=black, fill=black, label={right:$y$}] {};
\node[name=x, draw=black, fill=black, label={left:$x$}, left= .5 of y] {};
\node[name=v, draw=black, fill=\colorv, circle, above= .5 of x] {$v$};
\draw[-stealth] (x) -- (v);
\draw[-stealth] (y) |- (v.east);
\end{tikzpicture}
\caption{\begin{tabular}[t]{@{}c}
  Aux. Proposal  \\ $r(v; x, y)$
\end{tabular}}
\label{fig:distributions-aux-proposal}
\end{subfigure}
\captionsetup{skip=4pt, belowskip=5pt}
\caption{Target, proposal, and auxiliary proposal distributions used for
interval estimators (\cref{alg:entropy-upper-general,alg:entropy-lower-general})
of the entropy $\H{Y} = -\E{\log p(Y)}$, $Y\,{\sim}\,p(y)$.}
\label{fig:distributions}

\begin{algorithm}[H]
\caption{Monte Carlo upper bound $\hat{H}_Y$ on $\H{Y}$}
\label{alg:entropy-upper-general}
\begin{algorithmic}[1]
\For{$i=1\dots n$}
  \State $(\widetilde{X}, Y) \sim p(x,y)$
  \For{$j=1\dots m$}
    \State $(V, X) \sim q(v, x; Y)$
    \State $\displaystyle t_{ij} \gets \log \frac{p(X, Y)r(V; X, Y)}{q(V, X; y)}$
  \EndFor
\EndFor
\State \Return $-\sum_{i=1}^n\sum_{j=1}^m t_{ij} / nm$
\end{algorithmic}
\end{algorithm}
\vspace{-.05cm}
\begin{algorithm}[H]
\caption{Monte Carlo lower bound $\check{H}_Y$ on $\H{Y}$}
\label{alg:entropy-lower-general}
\begin{algorithmic}[1]
\For{$i=1\dots n$}
  \State $(X'_1, Y) \sim p(x, y)$
  \State $(X'_{2:m}) \sim \begin{aligned}[t]
  &\textrm{\footnotesize Markov chain targeting}\ p(x\mid Y) \\[-5pt]
  &\textrm{\footnotesize starting at}\ X'_1\ \textrm{\footnotesize (optional step)}
  \end{aligned}$
  \For{$j=1 \dots m$}
    \State $V \sim r'(v; X'_j, Y)$
    \State $\displaystyle t_{ij} \gets -\log \frac{q'(V, X'_j; Y)}{p(X'_j, Y)r'(V; X'_j, Y)}$
  \EndFor
\EndFor
\State \Return $-\sum_{i=1}^n\sum_{j=1}^m t_{ij}/nm$
\end{algorithmic}
\end{algorithm}
\vspace{-.5cm}
\end{figure}

\textbf{Interval estimators of entropy.}
Recalling the distribution $p(x,y)$ from~\cref{sec:overview},
suppose that $q(x; y)$ and $q'(x;y)$ are normalized proposal densities
over $\mathcal{X}$ parameterized by $\mathcal{Y}$.
From~\cref{eq:is-ratio-unbisaed}, for fixed $y \in \mathcal{Y}$,
setting $\tilde{h}(x) = p(x,y)$, $g(x) = q(x;y)$ gives an unbiased
estimate of $Z_h \equiv p(y)$;
and setting $h(x) = q'(x;y)$, $\tilde{g}(x) = p(x,y)$
gives an unbiased estimate of $1/Z_g \equiv 1/p(y)$:
\begin{align}
\E{\frac{p(X,y)}{q(X; y)}} = p(y), && \E{\frac{q'(X'; y)}{p(X',y)}} = 1/p(y),
\label{eq:laystall}
\end{align}
as needed for~\cref{eq:expect-lu}, by defining $w(x,y) \defas
p(x,y)/q(x;y)$ and $w'(x,y) \defas q'(x;y)/p(x,y)$ and letting $X \sim q(x;y)$
be $U$ and $X' \sim p(x \mid y)$ be $U'$.
Then~\cref{eq:Hl-generic} yields the Monte Carlo \emph{upper}
bound $\hat{H}_Y$ in~\cref{eq:Hy-sandwich};
and \cref{eq:Hu-generic} yields the Monte Carlo \emph{lower} bound
$\check{H}_Y$ in~\cref{eq:Hy-sandwich}.
While sampling $X' \sim p(x \mid y)$ given a fixed value $y$ is typically intractable
under our assumptions on $p$, since $\H{Y}$ is the expectation of random values
$-\log p(Y)$ for $Y \sim p$, it suffices to use joint samples $(X',Y') \sim p(x,y)$ to obtain the lower bound, since
\begin{align*}
&\int_{\mathcal{X}\times\mathcal{Y}}\log\left[q'(x;y)/p(x,y)\right]p(x,y) \diff{x}\diff{y} \\
&= \int_{\mathcal{Y}}\left[\int_{\mathcal{X}} \log\left[q'(x;y)/p(x,y)\right]p(x\mid y)\diff{x}\right]p(y) \diff{y} \\
&\le \int_{\mathcal{Y}}\left[-\log p(y)\right]p(y) \diff{y} = \H{Y},
\end{align*}
where the second line follows from~\citet[Thms.~B.46, B.52]{schervish1995}
and third line from~\cref{eq:is-log-ratio-biased}.
Given an initial sample $(X'_1, Y) \sim p(x,y)$,
an additional $m-1$ samples $\set{X'_2, \dots, X'_m}$ from $p(x \mid Y')$
to use for $\mathcal{T}_{n,m}$ in~\cref{eq:Hu-generic}
can be obtained by simulating a Markov
chain initialized at $X'_1$ that leaves $p(x \mid Y')$ invariant,
which will reduce $\V{\mathcal{T}_{n,m}}$ iff $\Pr[X'_i \ne X'_1] > 0$ for some $i$.
\subsection{Constructing accurate proposals}
\label{sec:aux-var-smc}

Estimators of normalizing constants and their inverses in direct
space as in~\cref{eq:laystall} can suffer from notoriously high variance,
especially when using proposals that do not closely match the
target~\citep{neal2008}.
However, \cref{eq:is-log-ratio-tail} suggests that log space estimators
can be more stable and, by~\cref{eq:is-log-ratio-biased,eq:is-log-ratio-ead},
the quality of the entropy bounds obtained via importance sampling
depends on constructing proposal distributions $q(x;y)$ and
$q'(x;y)$ that have small biases in expectation (over $Y \sim p(y)$):
\begin{align}
\mathbb{E}[\hat{H}_Y] - H(Y) &= \E{\KL{q(x; Y)}{p(x \mid Y)}} \label{eq:gap-upper}, \\
H(Y) - \mathbb{E}[\check{H}_Y] &= \E{\KL{p(x \mid Y)}{q'(x; Y)}} \label{eq:gap-lower}.
\end{align}
We next discuss two approaches to constructing accurate proposals
using probabilistic inference algorithms.

\textbf{Amortized variational inference.}
One approach to constructing accurate proposals $q(x; y)$ and $q'(x; y)$ in \cref{eq:laystall} is
to use a dataset $\set{(x_i,y_i)}_{i=1}^n$ simulated from $p$
to variationally train recognition networks $q_\phi(x; y)$ and $q'_\varphi(x; y)$
that each specify a family of distributions over $\mathcal{X}$, parametrized by $y \in \mathcal{Y}$ and $\phi$, $\varphi$, respectively.
Training $q$ via ``exclusive'' amortized variational inference, as in variational autoencoders~\citep{kingma2013}, minimizes both the bias
and an upper bound on the mean absolute deviation (MAD) of
the $\check{H}_Y$ in \cref{alg:entropy-upper-general}.
Similarly, training $q'$ via ``inclusive'' amortized variational inference, as in the ``sleep'' phase of the wake-sleep algorithm~\citep{hinton1995}, minimizes both the bias
and an upper bound on the MAD of $\hat{H}_Y$ in \cref{alg:entropy-lower-general}.
Thus, EEVI can leverage advances in training neural networks via stochastic gradient descent to improve estimation accuracy (refer to \cref{fig:mvn}).

\textbf{Auxiliary-variable Monte Carlo.}
Another approach to constructing accurate proposals,
which can be composed with variational learning~\citep{salimans2015},
is Monte Carlo methods such as annealed
importance sampling~\citep[AIS;][]{neal2001} and sequential Monte
Carlo~\citep[SMC;][]{del2006} that define proposal distributions on extended
state-spaces and yield state-of-the-art estimates of normalizing constants.
More specifically, these proposals $q(v,x; y)$ are defined
over an extended space $\mathcal{V} \times \mathcal{X}$, such that the
marginal $q(x; y) = \int_{\mathcal{V}}q(v,x;y) \diff{v}$ is an
integral over all auxiliary random variables sampled by $q$.
As the ratios in~\cref{eq:laystall} can no longer be evaluated,
we define a tractable ``auxiliary proposal distribution''
$r(v; x, y)$ over $\mathcal{V}$ parameterized by
$\mathcal{X} \times \mathcal{Y}$ (\cref{fig:distributions})
such that
\begin{align}
w(v,x,y)  &\defas \left[p(x,y)r(v; x,y)\right]/{q(v, x; y)}, \label{eq:w-extended-lo} \\
w'(v,x,y) &\defas q'(v, x; y)/\left[p(x,y)r'(v; x, y)\right]. \label{eq:w-extended-hi}
\end{align}
By \cref{eq:is-ratio-unbisaed}, these weight functions
satisfy~\cref{eq:expect-lu} by letting $(V,X) \sim q(v,x;y)$ serve as
$U$ and $(X',V') \sim p(x\mid y)r(v;x,y)$ as $U'$.
From~\cref{eq:is-log-ratio-biased}, the gap when lower bounding $\log p(y)$ using
these extended-space weights $w$ and $w'$
now accounts for the accuracy of the auxiliary proposals $r$ and $r'$
(illustrated in \cref{appx:eevi-variants}, \cref{fig:gaps}):
\begin{align}
\mathbb{E}[\hat{H}_Y] - H(Y) &\begin{aligned}[t]
    &= \E{\KL{q(x; Y)}{p(x | Y)}} \\
    &\hspace{-.5cm}+ \E{\KL{q(v \mid X; Y)}{r(v; X, Y)}}, 
\end{aligned} \label{eq:gap-aux-upper}\\
H(Y) - \mathbb{E}[\check{H}_Y]&\begin{aligned}[t]
  &= \E{\KL{p(x \mid Y)}{q'(x; Y)}} \\
  &\hspace{-.5cm}+ \E{\KL{r(v'; X', Y)}{q'(v \mid X'; Y)}}. \hspace{-1cm}
\end{aligned} \label{eq:gap-aux-lower}
\end{align}
\cref{alg:entropy-upper-general,alg:entropy-lower-general}
show interval estimators $[\check{H}_Y, \hat{H}_Y]$ that
implement~\cref{eq:Hu-generic,eq:Hl-generic} using the
extended weights $w$ and $w'$
in~\cref{eq:w-extended-lo,eq:w-extended-hi}.
Proposals without auxiliary variables are a special case,
where $\mathcal{V} = \set{\omega}$ is a singleton and
$r(v;x,y) = \delta(v; \omega)$
(\cref{appx:eevi-variants}, \cref{alg:entropy-lower-bound-basic,alg:entropy-upper-bound-basic}).


\begin{figure}[t]
\centering
\captionsetup[subfigure]{skip=5pt}
\begin{subfigure}{.5\linewidth}
\includegraphics[width=\linewidth]{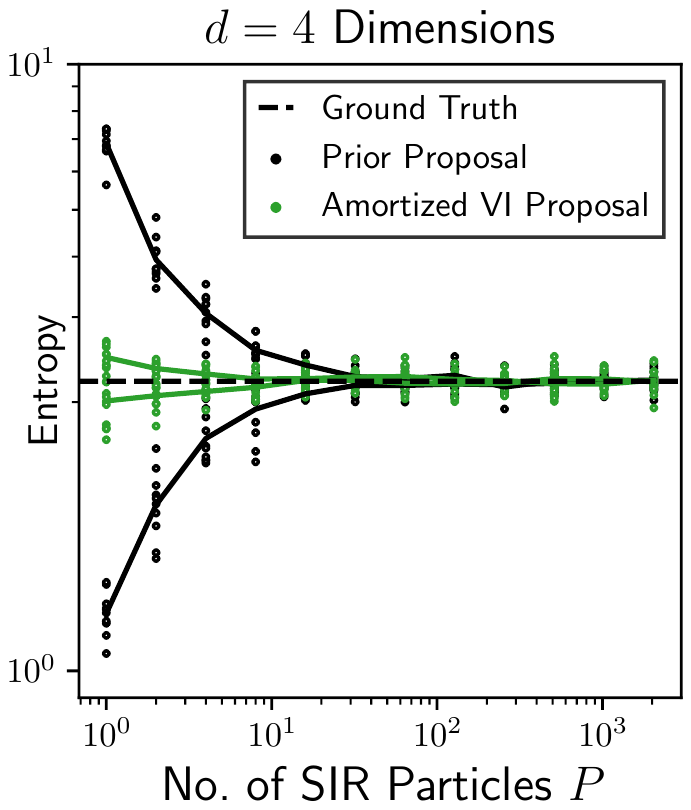}
\end{subfigure}\hfill
\begin{subfigure}{.5\linewidth}
\includegraphics[width=\linewidth]{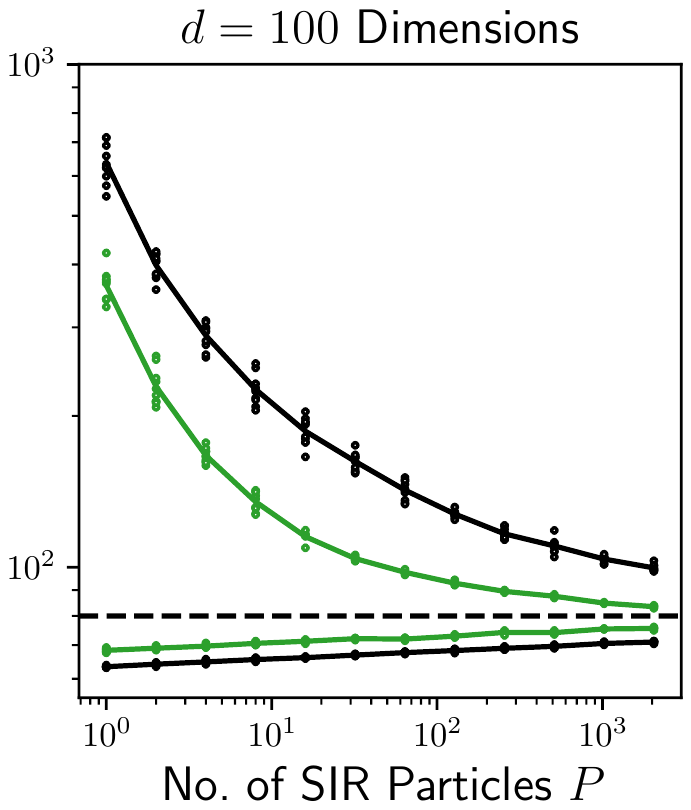}
\end{subfigure}
\captionsetup{skip=5pt,belowskip=-10pt}
\caption{Lower and upper bounds on the entropy of
$d/2$ dimensions $Y$ of a $d$-dimensional Gaussian $(X,Y)$
using \cref{alg:entropy-upper-general,alg:entropy-lower-general}
with the SIR scheme from~\cref{example:sir}.
The base proposals $q_0(x; y)$ are the prior and an amortized
variational approximation to the posterior that specifies a
separate regression for each dimension of $X$ given $Y$.
While the bounds converge to the ground truth (known
in closed form for Gaussians) as the number of SIR particles $P$ increases using both
proposals, the variational proposal is closer in ``exclusive'' and ``inclusive'' KL to the posterior,
resulting in a higher accuracy at each $P$.
At $d=100$, the lower bounds exhibit much lower bias and
variance as compared to the upper bounds, especially for small $P$.}
\label{fig:mvn}
\end{figure}

\begin{example}[Sampling-Importance Resampling]
\label{example:sir}
To fix ideas, consider a base proposal $q_0(x;y)$ that has no
auxiliary variables, which may have been hand
constructed or trained variationally.
The proposal $q_0$ can be embedded in a sampling-importance-resampling (SIR)
scheme that generates $P$ variables $x_{1:P}$ i.i.d.\ from $q_0$
and a selection index $k$ taking value $i$ with relative probability
$p(x_i,y)/q_0(x_i; y)$ (i.e., the auxiliary variables $v \defas (x_{1:P}, k)$),
then sets $x \gets x_k$:
\begin{align*}
q((x_{1:P}, k), x; y) &= \prod_{j=1}^P q_0(x_j) \left[\frac{\frac{p(x_k,y)}{q_0(x_k;y)}}{\sum_{i=1}^P \frac{p(x_i,y)}{q_0(x_i;y)}}\right] \delta(x; x_k). \\
\shortintertext{The task of the auxiliary proposal $r((x_{1:P}, k); x, y)$ is to infer $v$ for an $(x,y)$ pair as follows:}
r((x_{1:P}, k); x, y) &= \prod_{\substack{j=1 \\ j \ne k}}^P q_0(x_j; y) \left[\frac{1}{P}\right] \delta(x_k; x).
\end{align*}
The weight~\cref{eq:w-extended-lo} is then precisely the usual SIR estimate
of the marginal density of $y$,
\begin{align}
\frac{p(x,y)r(v;x,y)}{q(v,x;y)} = \frac{1}{P}\sum_{j=1}^P \frac{p(x_j,y)}{q_0(x_j;y)}.
\label{eq:sir-p}
\end{align}
By \citet[Thm.~1]{burda2016},
if $p(x,y)/q_0(x;y)$ is bounded then
$\KL{q(v,x;y)}{p(x\mid y)r(x)} \to 0$ (bias in~\cref{eq:is-log-ratio-biased})
as $P\to \infty$.
Refer to~\cref{fig:mvn} for an illustration
and~\cref{appx:eevi-variants}, \cref{alg:entropy-lower-bound-sir,alg:entropy-upper-bound-sir}
for EEVI with SIR.
\end{example}


\begin{figure}[t]
\setlength{\intextsep}{0pt}
\begin{algorithm}[H]
\caption{SMC Proposal $q(v, x; y)$}
\label{alg:smc-forward}
\algrenewcommand\algorithmicindent{1em}%
\begin{algorithmic}[1]
\Require \begin{tabular}{@{}l}
  Observation $y$ \\
  \end{tabular}
\Ensure Approximate sample $x$ from $p_T(x; y)$ and record $v$ of all
  sampled auxiliary random variables.
\State $x^i_0 \sim q_0(-; y)$ $(i = 1 \dots P)$
\State $w^i_0 \gets \tilde{p}_0(x^i_0;y) / q_0(x^i_0; y)$ $(i = 1 \dots P)$
\For{$t=1 \dots T$}
  \State $a^i_t \gets \mathrm{Categorical}(w^{1:P}_{t-1})$ $(i=1\dots P)$
\State $x^i_t \sim q_t( -; x^{a^i_t}_{t-1}, y)$ $(i=1\dots P)$
\State $w^i_t \gets \displaystyle\frac
      {\tilde{p}_t(x^i_t; y)l_{t-1}(x^{a^i_t}_{t-1}; x^i_t, y)}
      {\tilde{p}_{t-1}(x^{a^i}_{t}; y)q_t(x^i_t; x^{a^i_t}_{t-1}, y)}$
      $(i=1\dots P)$
      \label{algline:smc-forward-weight}
\EndFor
\State $I_T \sim \mathrm{Categorical}(w^{1:P}_T)$
\State \Return $(v, x) \defas ((I_T, a^{1:P}_{1:T}, x^{1:P}_{0:T}), x^{I_T})$
\end{algorithmic}
\end{algorithm}
\vspace{-.05cm}
\begin{algorithm}[H]
\caption{Auxiliary SMC Proposal $r(v; x, y)$}
\label{alg:smc-reverse}
\algrenewcommand\algorithmicindent{1em}%
\begin{algorithmic}[1]
\Require \begin{tabular}{@{}l}
  Observation $(x,y)$
  \end{tabular}
\Ensure Approximate sample $v$ of auxiliary variables
  generated by a run of~\cref{alg:smc-forward} that returned $x$.
\State $I_T \sim \mathrm{Uniform}(1 \dots P)$
\State $x^{I_T}_T \gets x$
\For{$t=T-1 \dots 0$}
  \State $I_t \sim \mathrm{Uniform}(1 \dots P)$
  \State $x^{I_t}_{t} \sim l_t(- ; x^{I_{t+1}}_{t+1}, y)$
  \State $a^{I_{t+1}}_{t+1} \gets I_t$
\EndFor
\State $x^i_0 \sim q_0(-; y)$ $(i = 1 \dots P; i \ne I_0)$
\State $w^i_0 \gets \tilde{p}_0(x^i_0; y) / q_0(x^i_0; y)$ $(i = 1 \dots P)$
\For{$t=1 \dots T$}
  \State $a^i_t \gets \mathrm{Categorical}(w^{1:P}_{t-1})$ $(i = 1 \dots P, i \ne I_t)$
\State $x^i_t \sim q_t(-; x^{a^i_t}_{t-1}) (i=1\dots P; i \ne I_t)$
\State $w^i_t \gets \displaystyle\frac
      {\tilde{p}_t(x^i_t; y)l_{t-1}(x^{a^i_t}_{t-1}; x^i_t, y)}
      {\tilde{p}_{t-1}(x^{a^i}_{t}; y)q_t(x^i_t; x^{a^i_t}_{t-1}, y)}$
      $(i=1\dots P)$
      \label{algline:smc-reverse-weight}
\EndFor
\State \Return $v \defas (I_{T}, a^{1:P}_{1:T}, x^{1:P}_{0:T})$
\end{algorithmic}
\end{algorithm}
\vspace{-.5cm}
\end{figure}


\begin{figure*}[ht]

\captionsetup{aboveskip=10pt,belowskip=5pt}
\captionsetup[subfigure]{aboveskip=0pt,belowskip=0pt}
\begin{subfigure}[m]{.375\linewidth}
\includegraphics[width=\linewidth]{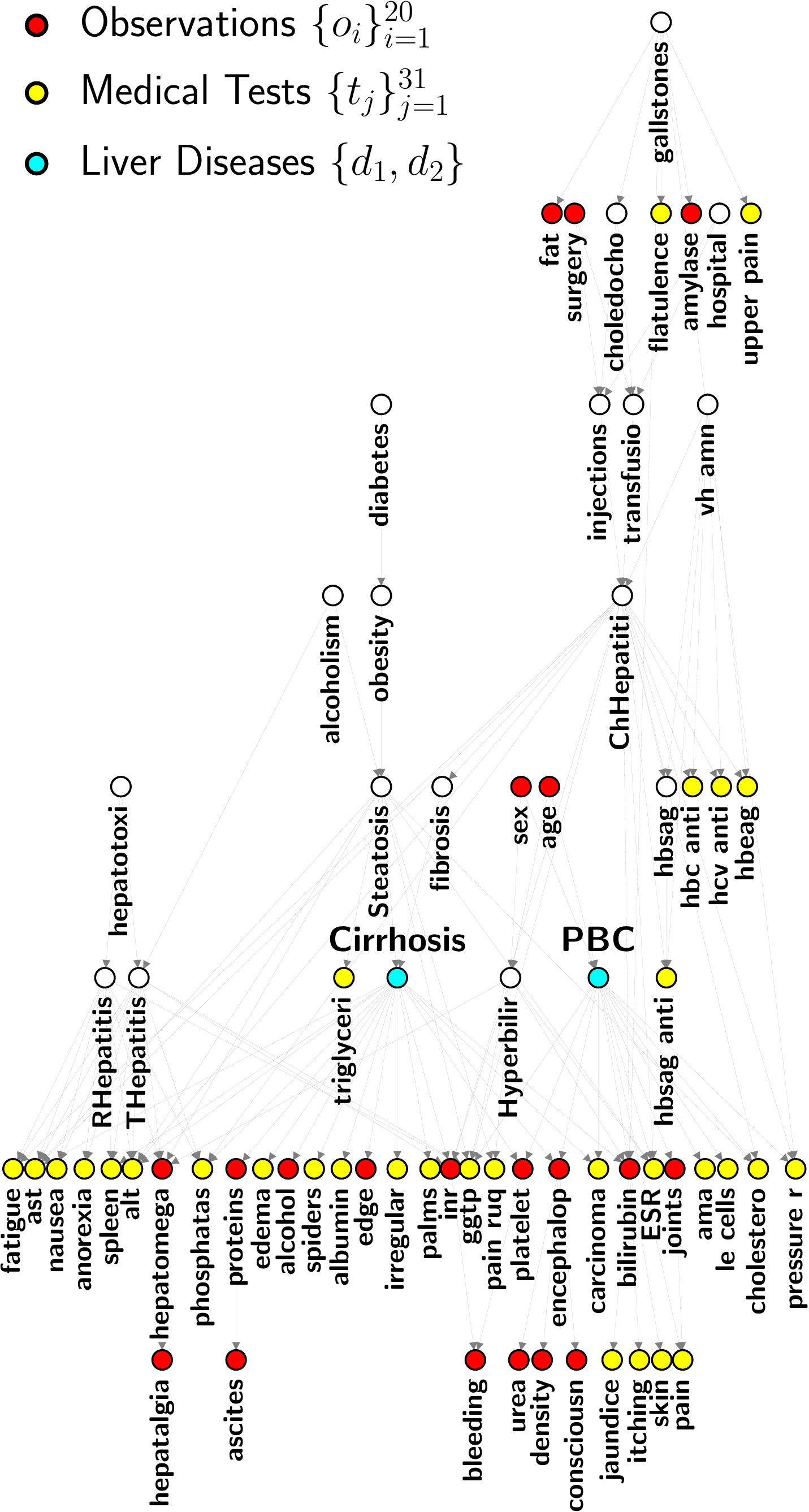}
\captionsetup{skip=5pt}
\caption{\mbox{HEPAR Liver Disease Model \citep{hepar2003}}}
\label{fig:hepar-network}
\end{subfigure}\hfill
\begin{subfigure}[m]{.6\linewidth}
\begin{subfigure}[t]{.05\linewidth}
\centering
\captionsetup{aboveskip=2pt, belowskip=0pt}
\caption*{}
\label{fig:hepar-dummy-pbc}
\begin{tikzpicture}
\node[name=a, fill=none] {};
\node[name=b, shape=coordinate, above=.1 of a] {};
\node[name=t, shape=coordinate, above=5.1 of b] {};
\draw[-stealth]
  (t)
  -- node[pos=.5,name=t,rotate=90,yshift=.25cm] {\footnotesize Decreasing Information Value}
  (b);
\end{tikzpicture}
\end{subfigure}\hfill
\begin{minipage}[t]{.93\linewidth}
\centering
\captionsetup{aboveskip=2pt, belowskip=0pt}
\caption{Top 10 medical tests for liver disease $d = $ PBC}
\label{tab:hepar-pbc-ranking}
\begin{tabular}{|l|Sl|l|}
\hline
Medical Test $t$
  & $\H{d \mid t, \set{o_i}_{i=1}^{20}}$
  & Prediction Error for $d$
\\\hline
ama             & 0.274 & 0.030 \\
ESR             & 0.346 & 0.094 \\
cholesterol     & 0.385 & 0.126 \\
ggtp            & 0.397 & 0.131 \\
carcinoma       & 0.402 & 0.144 \\
pain            & 0.404 & 0.148 \\
pressure ruq    & 0.410 & 0.174 \\
le cells        & 0.411 & 0.172 \\
irregular liver & 0.413 & 0.173 \\
edge            & 0.415 & 0.175 \\ \hline\hline
no test         & 0.418 & 0.175 \\
no test or obs  & 0.663 & 0.486 \\ \hline
\end{tabular}
\end{minipage}
\vspace{5pt}

\begin{subfigure}[t]{.05\linewidth}
\centering
\captionsetup{aboveskip=2pt, belowskip=10pt}
\caption*{}
\label{fig:hepar-dummy-cirrhosis}
\begin{tikzpicture}
\node[name=a, fill=none] {};
\node[name=b, shape=coordinate, above=.1 of a] {};
\node[name=t, shape=coordinate, above=5.1 of b] {};
\draw[-stealth]
  (t)
  -- node[pos=.5,name=t,rotate=90,yshift=.25cm] {\footnotesize Decreasing Information Value}
  (b);
\end{tikzpicture}
\end{subfigure}\hfill
\begin{minipage}[t]{.925\linewidth}
\centering
\captionsetup{aboveskip=2pt, belowskip=0pt}
\caption{Top 10 tests for liver disease $d = $ Cirrhosis}
\label{tab:hepar-cirrhosis-ranking}
\begin{tabular}{|l|Sl|l|}
\hline
Medical Test $t$
  & $\H{d \mid t, \set{o_i}_{i=1}^{20}}$
  & Prediction Error for $d$
\\\hline
irregular liver & 0.225 & 0.035 \\
edge            & 0.247 & 0.036 \\
spiders         & 0.253 & 0.037 \\
spleen          & 0.256 & 0.043 \\
palms           & 0.261 & 0.043 \\
carcinoma       & 0.261 & 0.049 \\
edema           & 0.262 & 0.056 \\
triglycerides   & 0.264 & 0.056 \\
albumin         & 0.268 & 0.068 \\
phosphatase     & 0.272 & 0.068 \\ \hline\hline
no test         & 0.290 & 0.070 \\
no test or obs  & 0.320 & 0.080 \\ \hline
\end{tabular}
\end{minipage}
\end{subfigure}
\caption{
Using EEVI to rank diagnostic medical tests
(yellow) in the HEPAR liver disease network by how informative they
are about diseases (blue) given a pattern of observations (red).
For both the PBC disease in \subref{tab:hepar-pbc-ranking} and
cirrhosis diseases in \subref{tab:hepar-cirrhosis-ranking}, conducting
tests that give lower conditional entropy $H(d \mid t, \set{o_i}_{i=1}^{20})$ of the disease
(i.e., higher conditional mutual information) results in lower prediction
errors about its presence or absence.
%
}
\label{fig:hepar}
\vspace{-.5cm}
\end{figure*}

\begin{example}[Generalized Sequential Monte Carlo]
\label{example:smc}
The SIR proposal from~\cref{example:sir} can be generalized to the
setting of a sequence
$\set{\tilde{p}_t(x;y)}_{t=0}^T$
with $T$ intermediate (unnormalized) target densities
such that $\tilde{p}_T(x,y) = p(x,y)$,
which may represent partial posteriors in particle
filtering for temporal models~\citep{doucet2011} or tempered distributions as in
AIS~\citep{neal2001} and sequential Bayesian
updating~\citep{del2006} for static models.
\cref{alg:smc-forward} shows the proposal $q(v,x;y)$ from
a run of SMC
with initial kernel $q_0(x_0; y)$;
forward kernels $q_t(x_t; x_{t-1}, y)$ ($t=1 \dots T$);
backward kernels $l_t(x_t; x_{t+1}, y)$ ($t=0 \dots T-1$);
and $P$ particles.
Here, $v \defas (I_T, a^{1:P}_{1:T}, x^{1:P}_{0:T})$ contains all
auxiliary variables and $x \sim \delta({x^{I_T}_T})$ is the selected
final particle.
\cref{alg:smc-reverse} shows the auxiliary proposal $r(v;x,y)$,
which infers $v$ given $(x,y)$ using generalized ``conditional SMC''~\citep{andrieu2010,towner2017}.
Simplifying the weights~\cref{eq:w-extended-lo,eq:w-extended-hi} gives
\begin{align*}
w(v,x,y)= \prod_{t=0}^T\left[ \frac{1}{P}\sum_{j=1}^P w^j_t \right],
\; w'(v,x,y) = \frac{1}{w(v,x,y)},
\end{align*}
where $w^j_t$ terms are defined
in~\cref{alg:smc-forward} \cref{algline:smc-forward-weight} for $w(v,x,y)$
and~\cref{alg:smc-reverse} \cref{algline:smc-reverse-weight} for $w'(v,x,y)$.
It is also possible to compose SMC with SIR (\cref{appx:eevi-variants}, \cref{alg:entropy-lower-bound-sir-nested,alg:entropy-upper-bound-sir-nested})
and learn $q_t$ variationally~\citep{maddison2017}.
\end{example}

\section{APPLICATIONS}
\label{sec:applications}

We applied EEVI to two data acquisition problems:
In \cref{sec:applications-hepar}, we rank medical tests in an expert
system for diagnosing liver disorders according to their conditional
mutual information with diseases of interest, given a pattern of
symptoms and patient attributes.
In \cref{sec:applications-diabetes}, we analyze a dynamic insulin
model to compute optimal times to take blood glucose measurements that
maximize information about a patient's insulin sensitivity, given
their insulin and meal schedule.
Experiments were conducted with the Gen~\citep{towner2019gen}
probabilistic programming system (see code supplement).

\subsection{HEPAR Liver Disease Network}
\label{sec:applications-hepar}

HEPAR~\citep{lucas1989} is a medical expert system that helps
physicians diagnose complex disorders in the liver and biliary tract.
We analyze the Bayesian network variant of HEPAR from~\citet{hepar2003}
shown in~\cref{fig:hepar-network},
which contains nodes representing a patient's
\begin{enumerate*}[label=(\roman*)]
\item attributes, such as age and obesity;
\item latent liver diseases, such as PBC and cirrhosis; and
\item symptoms, such as nausea and blood pressure.
\end{enumerate*}
We consider the following inference problem: {\itshape Given a patient
with a set $\set{o_i}$ of observed attributes and symptoms, which
medical tests $\set{t_j}$ for symptoms should the physician conduct
to maximize information about the absence or presence of a disease
$d$?}
We formalize the problem as ranking the tests by decreasing
conditional mutual information (CMI) with $d$:
\begin{align}
I(d : t \mid \set{o_i}) = H(d \mid \set{o_i}) - H(d \mid t, \set{o_i}).
\label{eq:hepar-cmi}
\end{align}
Since $\H{d\,{\mid}\,\set{o_i}}$ is constant,
it is more computationally efficient (and
equivalent) to rank tests $\set{t_j}$ by increasing conditional entropy
$\H{d\,{\mid}\,t_j, \set{o_i}}$ as in \cref{eq:measure-ce}.

\begin{figure}[t]
\includegraphics[width=\linewidth]{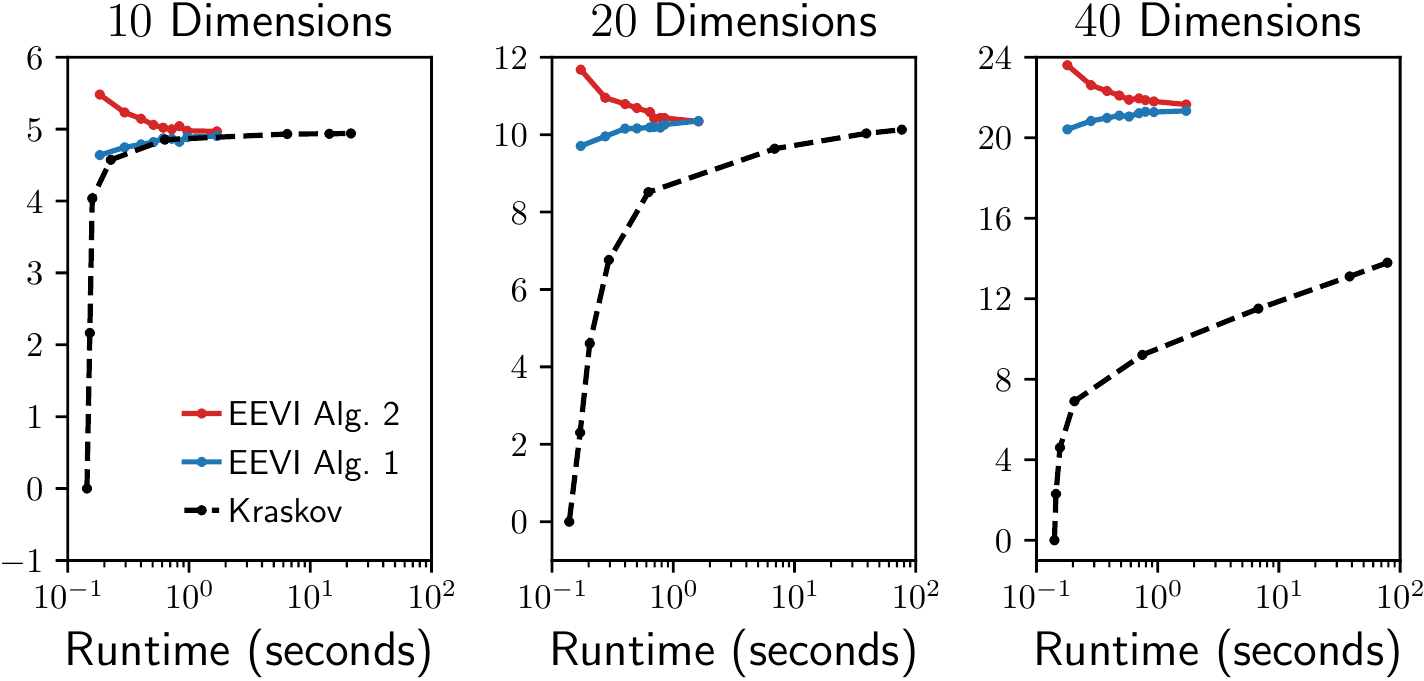}
\captionsetup{belowskip=-12pt}
\caption{Runtime for estimating entropies in the HEPAR network using
EEVI (\cref{alg:entropy-upper-general,alg:entropy-lower-general})
and the baseline nonparametric estimator of~\citet{kraskov2004} for varying
dimensionality.
Runtime increases with number of SIR samples $P$ for EEVI (see~\cref{eq:sir-p}) and
number of simulations from $p(x,y)$ for~\citeauthor{kraskov2004}
}
\label{fig:hepar-kraskov}
\end{figure}

\textbf{Results.} \cref{fig:hepar-network} shows a setting of 20
observed nodes (red), 31 medical tests for symptoms (yellow), and two liver diseases (blue).
\cref{tab:hepar-cirrhosis-ranking} shows
the top 10 tests ranked by $\H{d \mid t_j, \set{o_i}}$
In~\cref{tab:hepar-pbc-ranking,tab:hepar-cirrhosis-ranking} the first
and second columns show the top 10 most informative tests and
conditional entropy values for PBC and cirrhosis diseases,
respectively.
The conditional entropies are computed as the midpoint of interval estimates from EEVI
(\cref{alg:entropy-upper-general,alg:entropy-lower-general}),
using SIR auxiliary variable proposal (\cref{example:sir}) with
ancestral sampling base proposal and number of particles $P$ that
drives the interval width to below $10^{-3}$ nats.
To assess how useful these rankings might be to a physician, the
third columns in~\cref{tab:hepar-pbc-ranking,tab:hepar-cirrhosis-ranking}
shows median prediction errors for each disease in 5000 random patients
obtained by conditioning on a given test.
For each test $t_k$, prediction error is defined as the log loss
between the posterior $p(d \mid t_k, \set{o_i})$ and
the ground-truth label; for reference, prediction errors using $p(d \mid
\set{o_i})$ (i.e., no test) and $p(d)$ (i.e., no test or obs) are also
shown.
The results confirm that tests with higher information values
correlate with lower errors and that conditional entropy estimates
from EEVI can serve as a useful decision-making tool in this expert
system.

\textbf{Runtime.} \cref{fig:hepar-kraskov} compares runtime vs.\
accuracy profiles of EEVI to the nonparametric estimator
of~\citet{kraskov2004} for estimating the joint entropy
$\H{\set{o_i}_{i=1}^k}$ of $k$-dimensional random variables in the HEPAR
network ($k=10$, $20$, $40$).
For these queries, upper and lower bounds from EEVI converge between
1--5 seconds for each $k$.
In contrast, the nonparametric estimator converges slower, as it is
``model-free'' and estimates log probabilities from simulated data
without leveraging model structure.
The plots also highlight a key feature of EEVI: the width of the interval
quantifies the accuracy of the estimate at any given
level of computation and can squeeze the true value,
whereas the nonparametric estimator provides point
estimates (typically lower bounds) whose accuracy at a given level
of computation is unknown.

\textbf{Variance Control.} Following~\cref{eq:measure-ce}, bounds on the
conditional entropy $\H{d \mid t_j, \set{o_i}}$ in~\cref{fig:hepar}
are a difference of two marginal entropy bounds (see also~\cref{fig:derived}):
\begin{flalign}
H^{\rm lb}({d \,{\mid}\, t_j, \set{o_i}}) &= H^{\rm lb}({d, t_j, \set{o_i}})\,{-}\,H^{\rm ub}({t_j, \set{o_i}}), \hspace{-5cm} && \label{eq:ce-diff-lb} \\
H^{\rm ub}({d \,{\mid}\, t_j, \set{o_i}}) &= H^{\rm ub}({d, t_j, \set{o_i}})\,{-}\,H^{\rm lb}({t_j, \set{o_i}}). \hspace{-5cm} && \label{eq:ce-diff-ub}
\end{flalign}
\cref{fig:hepar-variance-boxplot} shows 18 realizations of interval
estimates of conditional entropy using shared samples (non-i.i.d.\
Estimator) and independent samples (i.i.d.\ Estimator) to bound the
marginal entropies.
The non-i.i.d.\ Estimator has lower variance as compared to the i.i.d.\ Estimator
and ensures that the realized lower bound is smaller than the realized upper bound.
\cref{fig:hepar-variance-scatter} explains this behavior in terms of the
correlation of the random weights~\cref{eq:w-extended-lo,eq:w-extended-hi}
used to estimate the four marginal entropies in~\cref{eq:ce-diff-lb,eq:ce-diff-ub}
(recall that $\mathrm{Var}[A - B] = \mathrm{Var}[A] + \mathrm{Var}[B] - 2\mathrm{Cov}[A, B]$
for any pair of real random variables $A$ and $B$).
We recommend that practitioners empirically assess the variance
and width of the interval estimators as in~\cref{fig:hepar-variance-scatter},
as well as inspect scatter plots of the weights when adding/subtracting bounds
from EEVI to bounds on derived information measures.

\begin{figure}[t]
\centering
\begin{subfigure}[b]{\linewidth}
\centering
\includegraphics[width=\linewidth]{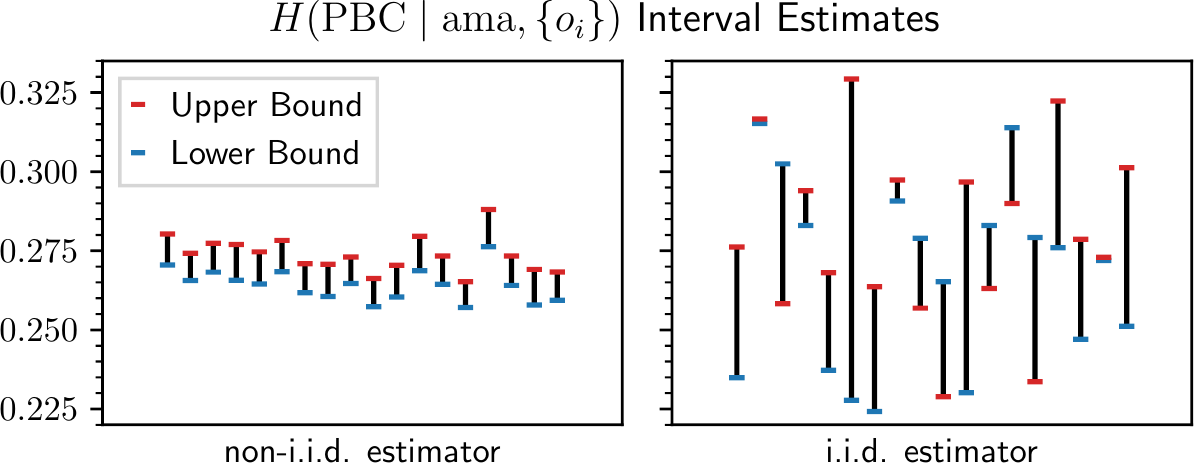}
\captionsetup{skip=5pt,belowskip=10pt}
\caption{36 Realizations of estimators of conditional entropy.}
\label{fig:hepar-variance-boxplot}
\end{subfigure}
\begin{subfigure}[b]{\linewidth}
\includegraphics[width=\linewidth]{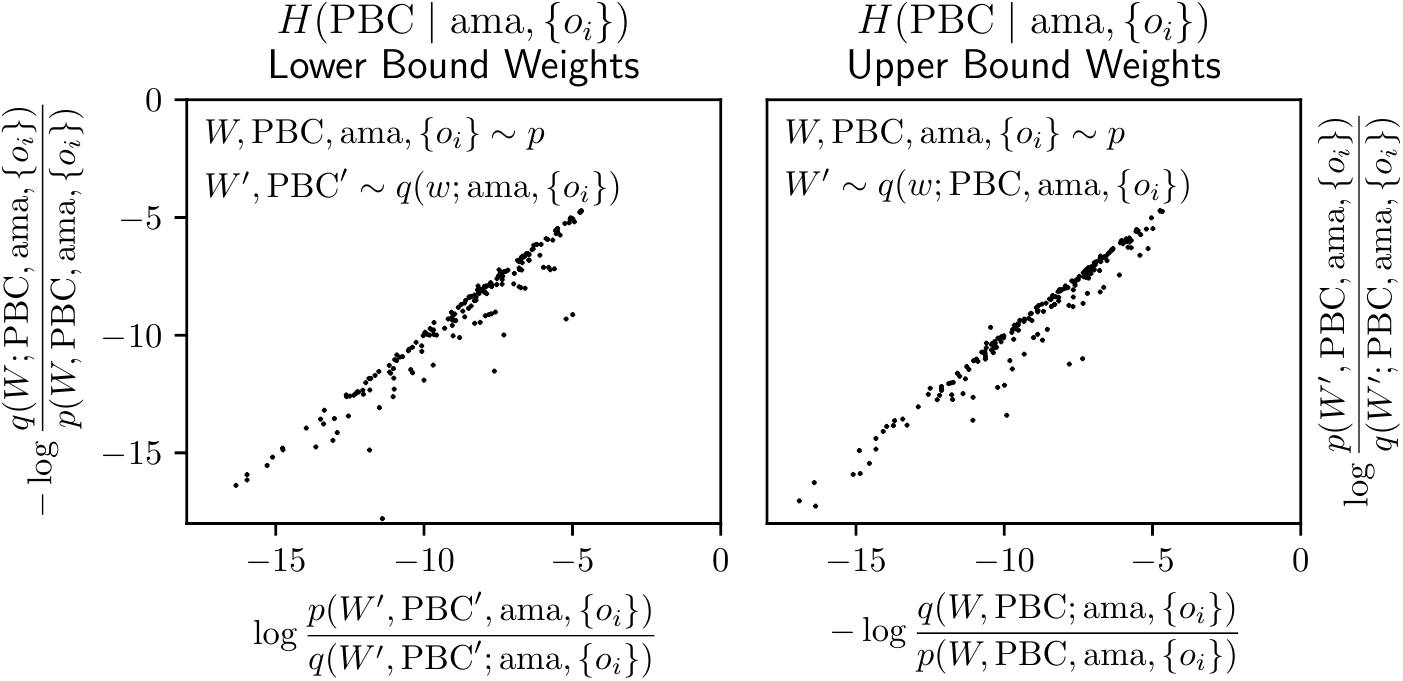}
\captionsetup{skip=5pt,belowskip=0pt}
\caption{200 random samples of log importance weights}
\label{fig:hepar-variance-scatter}
\end{subfigure}
\captionsetup{belowskip=-10pt}
\caption{\subref{fig:hepar-variance-boxplot} Reducing the variance of interval estimators of
$\H{\mathrm{PBC} \mid \mathrm{ama}, \set{o_i}}$ (\cref{tab:hepar-pbc-ranking}, row 1)
by non-i.i.d.\ sampling.
\subref{fig:hepar-variance-scatter}
As conditional entropy bounds~\cref{eq:ce-diff-lb,eq:ce-diff-ub} are average differences
(x-axes minus y-axes) of random weights that here are positively correlated,
computing differences using non-i.i.d.\ samples reduces variance by twice the covariance
(the variable $W$ refers to all other variables in the HEPAR network~\cref{fig:hepar-network}).}
\label{fig:hepar-variance}
\end{figure}


\begin{figure*}
\centering
\captionsetup[subfigure]{belowskip=0pt}
\centering
\begin{subfigure}[b]{.32\linewidth}
\centering
\includegraphics[width=\linewidth]{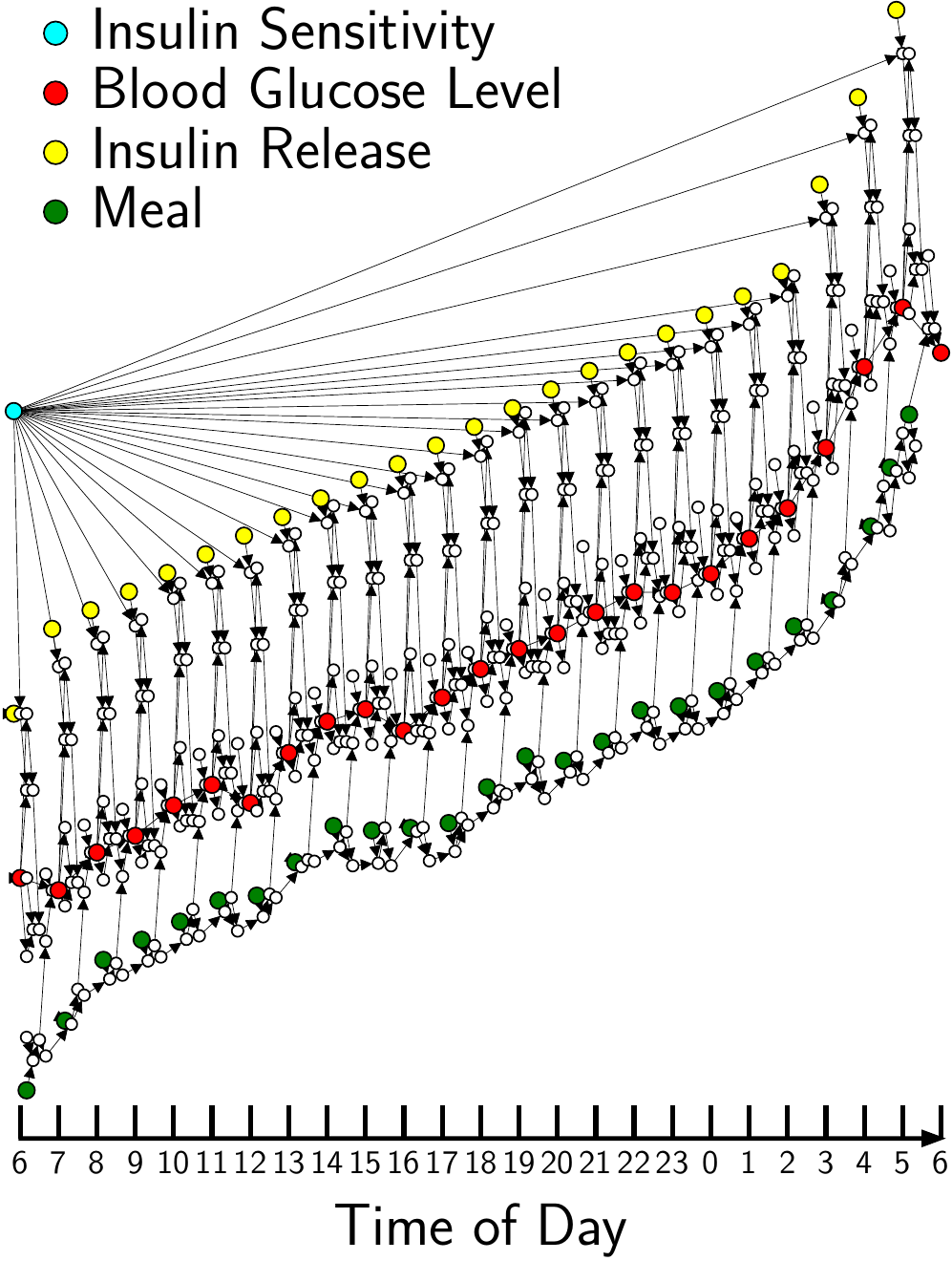}
\caption{Dynamic model of carbohydrate metabolism \citep{andreassen1991}}
\label{fig:diabetes-network}
\end{subfigure}\hfill
\begin{subfigure}[b]{.32\linewidth}
\includegraphics[width=\linewidth]{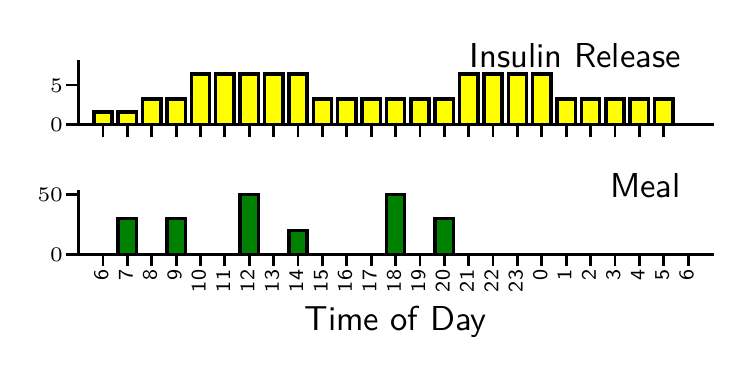}\vspace{-.25cm}
\includegraphics[width=\linewidth]{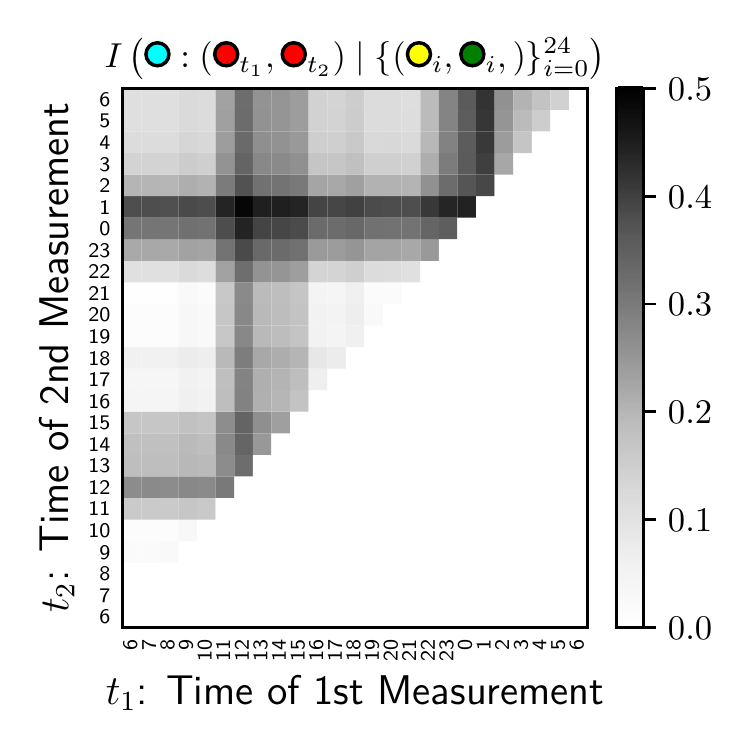}
\caption{Meal and Insulin Scenario 1}
\label{fig:diabetes-cmi-1}
\end{subfigure}\hfill%
\begin{subfigure}[b]{.32\linewidth}
\includegraphics[width=\linewidth]{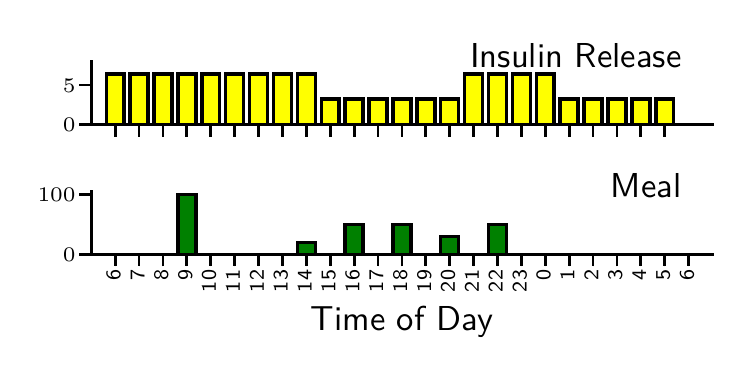}\vspace{-.25cm}
\includegraphics[width=\linewidth]{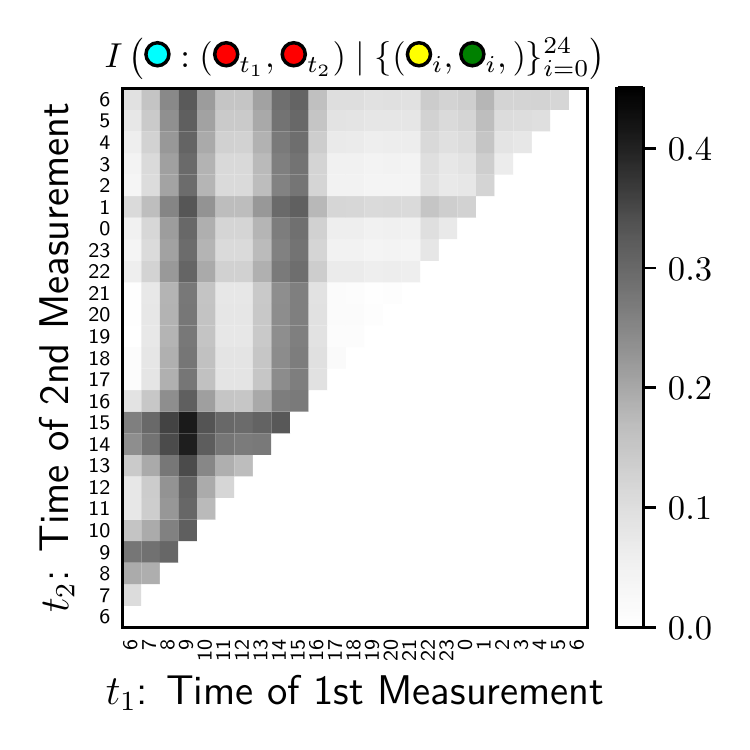}
\caption{Meal and Insulin Scenario 2}
\label{fig:diabetes-cmi-2}
\end{subfigure}
\caption{Inferring optimal pairs of times to measure blood glucose
level (red) that maximize information about a patient's latent
insulin sensitivity (blue).
Each heatmap in
\subref{fig:diabetes-cmi-1}--\subref{fig:diabetes-cmi-2} shows estimates from EEVI
of the conditional mutual information of insulin sensitivity with a pair of
blood glucose measurements for all pairs of times, under a certain
scenario of the patient's insulin release (yellow) and meal (green) schedule.
Each scenario has a different optimal pair of times to measure
blood glucose: 12pm/1am and 9am/3pm, respectively.
}
\label{fig:diabetes}
\end{figure*}

\subsection{Dynamic Insulin Model for Diabetes}
\label{sec:applications-diabetes}

We applied EEVI to solve a data acquisition task in a
differential equation model of carbohydrate
metabolism~\citep{andreassen1991} used by physicians for insulin
adjustment in diabetic patients.
While most medications have two or three standard dosing options,
insulin dose is highly individualized to each patient.
Finding the correct dose often requires an iterative adjustment
process that accounts for the patient's insulin response as well as
their insulin sensitivities and changing clinical conditions.
A variety of commercial insulin management software such as Glytec and EndoTool
have been developed to aid clinicians with this nuanced and costly process.

In~\cref{fig:diabetes-network} the ``insulin sensitivity''
node (blue) is a global latent parameter that dictates how effectively the
patient converts released insulin (from e.g., oral medications or injections)
into biologically usable insulin.
The ``meal'' (green) and ``insulin release'' (yellow) nodes are
intervention variables based on the patient's victual and
medication intake over a 25-hour period.
At time $t$, the blood glucose level is a noisy function of
insulin sensitivity and the following variables at $t-1$: blood
glucose level, meal, insulin release, and 14 intermediate biological
latent variables (white nodes); see~\citet{andreassen1991}
for full details.

Suppose a diabetic patient is undergoing the insulin adjustment process
and the physician is interested in the following problem:
{\itshape Given the patient's meal and insulin release schedule, at
which pairs of times should blood glucose level be measured to
maximize information about insulin sensitivity?}
We formalize the problem as ranking pairs of times $(t_1, t_2)$ by
decreasing CMI values with insulin sensitivity
(for $0 \le t_1 < t_2 \le 24$), given the meal and insulin release schedule:
\begin{align*}
I(\textrm{ins sens}, (\mathrm{BG}_{t_1}, \mathrm{BG}_{t_2}) \mid \set{(\textrm{meal}_t, \textrm{ins rel}_t) }_{t=0}^{24}),
\end{align*}
where $\mathrm{BG}_{t}$ indicates blood glucose measured at time $t$
and ``ins sense'' the latent insulin sensitivity.
To handle the temporal model, we compute entropy bounds
using the SMC proposals
in~\cref{example:smc,alg:smc-forward,alg:smc-reverse}, with a number
of particles $P$ that squeezes the interval width to $10^{-2}$
nats.
\cref{fig:diabetes-cmi-1,fig:diabetes-cmi-2} show estimates of CMI for
all pairs of time points under two different scenarios of meal and
insulin release.
Assuming that the model of~\citet{andreassen1991} is accurate, the
optimal times from these heatmaps may represent valuable insights for
the physician, as insulin sensitivity relates to blood glucose level
through complex dynamics of carbohydrate metabolism that are
challenging and costly to assess heuristically.
Interval estimators of entropy enable quantitative analysis of
information values of time points by probabilistic inference in the
model for any meal and insulin schedule.

\section{RELATED WORK}
\label{sec:related}

Several works have studied variational bounds of information
measures in a known distribution~\citep{barber2004,alemi2018,foster2019,poole2019}.
Unlike EEVI, these previous estimators do not not apply to arbitrary
subsets of random variables in a generative model and do not
return interval estimates.
Thus, the estimators in this paper can compute two-sided bounds in
settings not handled previously.
For example, upper bounding $I(X\,{:}\,Y)$ in \citet[Fig.~1]{poole2019} and \citet[Eq.~(9)]{foster2019}
assumes $p(y\,{\mid}\,x)$ is tractable.
In contrast, \cref{alg:entropy-upper-general,alg:entropy-lower-general}
in this paper can be used to compute interval estimates of $I(X\,{:}\,Y)$ even when
$p(y\,{\mid}\,x)$ and $p(x\,{\mid}\,y)$ are
intractable; see applications in~\cref{sec:applications}.

Several works have developed nonparametric entropy estimators given
i.i.d.\ data from an unknown distribution~\citep{kozchenkko1987,paninski2003,kraskov2004,cruz2009,belghazi2018,goldfeld2020}.
This work assumes that the distribution is known.
With our assumptions on the target model $p$, nonparametric
estimators can often be used in model-based settings by applying them
to i.i.d.\ data simulated from the model.
A drawback of using nonparametric estimators in this way, however, is
that they ignore the known model structure: \cref{fig:hepar-kraskov}
suggests that EEVI scales better, because the model structure is used
to build a suitable proposal.
A second advantage of EEVI is that the interval width indicates the
quality of the estimate at a given level of computational effort,
whereas nonparametric methods do not deliver two-sided bounds.
The flip side is that building accurate proposals for EEVI needs
more expertise as compared to nonparametric methods.

\citet{rainforth2018} give a thorough treatment of consistency and
convergence properties of a very general class of nested Monte Carlo
estimators.
The expressions in \cref{eq:Hl-generic,eq:Hu-generic} used for EEVI are
instances of nested Monte Carlo, where the inner expectation is obtained via
pseudo-marginal methods~\citep{andrieu2009} and the non-linear mapping
is log.

\citet{grosse2015} introduced the idea of using annealed importance sampling or
sequential harmonic mean to ``sandwich'' marginal log probabilities.
\citet{grosse2016} and~\citet{wu2017} applied these estimators to
diagnose MCMC inference algorithms and analyze deep generative models,
respectively.
Our work develops log probability bounds for the new problem of
squeezing entropy values and forming interval estimators, as well as
composing the estimates to bound many information-theoretic
quantities.
\citet{towner2017} use a similar family of auxiliary-variable
importance samplers from~\cref{sec:aux-var-smc} to upper bound
symmetric KL divergences between a pair of distributions---in that
setting, the normalizing constants
in~\cref{eq:is-log-ratio-biased} are irrelevant as they cancel out
so the weights~\cref{eq:w-extended-lo,eq:w-extended-hi} are only needed
up to normalizing constants.
In contrast, the normalizing constants in EEVI cannot be ignored as
they are the essential quantities needed to bound entropies.

We implemented the EEVI
(\cref{alg:entropy-upper-general,alg:entropy-lower-general}) as
meta-programs in the Gen probabilistic programming
system~\citep{towner2019gen}, available in the online code supplement.
These implementations make it easy to apply EEVI to a broad set of
generative models that are specified as probabilistic programs in Gen,
provided that the target random variables correspond to random choices
at addresses that exist in each execution of the program.
Moreover, EEVI is more widely applicable than previous
probabilistic programming-based estimators for information-theoretic quantities, such
as \citet[Alg.~2a]{saad2017dependencies}, \citet[Fig.~11]{gehr2020},
and~\citet[Sec.~8.2]{Narayanan2020}---these estimators assume that the probabilistic
programming system can exactly compute any marginal or conditional
density, which is rarely possible except in languages
that restrict modeling expressiveness to enable
exact inference~\citep{saad2016cgpm,gehr2016,narayanan2016,saad2021sppl}.

\section{CONCLUSION}
\label{sec:conclusion}

We have introduced estimators of entropy via inference (EEVI), a new
solution to the fundamental problem of estimating the entropy of
arbitrary variables in a generative model by leveraging probabilistic
inference.
EEVI computes interval estimates of entropy that can be
composed to accurately squeeze several other information-theoretic quantities.
The experiments show that EEVI delivers information-theoretically optimal
solutions to challenging problems from hepatology and endocrinology.
Avenues for future work include using EEVI for optimal experiment design and
information analysis in widely used medical expert
systems~\citep{Zhou2021} by developing the application in tandem with
clinical domain experts.
To increase the level of automation and accuracy of EEVI, it may be
worthwhile to leverage recent probabilistic programming methods that
automatically learn amortized proposal distributions~\citep{paige2016,ritchie2016,le2017}.
Another direction is using EEVI to extend the class of
models and queries that can be handled by existing probabilistic
programming-based frameworks for tasks such as searching structured
databases~\citep{saad2017search} and optimal experiment
design~\citep{ouyang2018} that need accurate estimates of entropy and information.

\bibliographystyle{plainnat}
\bibliography{paper}

\clearpage
\appendix


\thispagestyle{empty}

\twocolumn[ \makesupplementtitle ]


\begin{figure*}[!t]
\begin{tikzpicture}
\tikzstyle{box}=[draw=none, inner sep=1.5pt]

\node[name=xll, coordinate] at (0,0) {};
\node[name=xrr, coordinate, right=14 of xll] {};
\node[name=xm, coordinate] at ($(xll)!.5!(xrr)$) {};
\node[name=xlm, coordinate, left=2 of xm] {};
\node[name=xrm, coordinate, right=2.5 of xm] {};

\def\yoff{.15}
\draw[thick] (xll) -- (xrr);
\foreach \x in {xll, xrr, xm, xlm, xrm}
  \draw[thick] ([yshift=\yoff cm ]\x) -- ([yshift=-\yoff cm]\x);

\node[name=Hy, box, anchor=south, above=\yoff cm of xm] {$\H{Y}$};
\node[name=Elm, box, anchor=south, above=\yoff of xlm] {$\E{w'(X', Y')}$};
\node[name=Ell, box, anchor=south, above=\yoff cm of xll] {$\E{w'(V', X', Y')}$};
\node[name=Erm, box, anchor=south, above=\yoff cm of xrm] {$-\E{w(X,Y)}$};
\node[name=Err, box, anchor=south, above=\yoff cm of xrr] {$-\E{w(V,X,Y)}$};

\node[name=Kll, box, anchor=north, yshift=-.45cm] at ($(xll)!0.5!(xlm)$) {$\E{\KL{r'(v; X',Y')}{q'(v|X';Y')}}$};
\node[name=Klm, box, anchor=north, yshift=-1.1cm, xshift=-1cm] at ($(xlm)!0.5!(xm)$) {$\E{\KL{p(x|Y')}{q'(x;Y')}}$};
\node[name=Krm, box, anchor=north, yshift=-1.1cm, xshift=1cm] at ($(xm)!0.5!(xrm)$) {$\E{\KL{q(x;Y)}{p(x|Y)}}$};
\node[name=Krr, box, anchor=north, yshift=-.45cm] at ($(xrm)!0.5!(xrr)$)  {$\E{\KL{q(v|X;Y)}{r(v; X,Y)}}$};

\draw[-stealth] (Kll.north) -- ($(xll)!0.5!(xlm)$);
\draw[-stealth] (Klm.north) -- ($(xlm)!0.5!(xm)$);
\draw[-stealth] (Krm.north) -- ($(xrm)!0.5!(xm)$);
\draw[-stealth] (Krr.north) -- ($(xrm)!0.5!(xrr)$);

\node[name=Vl, draw=black, yshift=2.5cm,
  label={[align=center]above:{\underline{Monte Carlo Lower Bound} $\mathbb{E}[\check{H}_Y]$ \\ proposal $q'(v,x;y)$ \\ aux.\ proposal $r'(v;x,y)$}}
] at ($(xll)!0.5!(xm)$){$\begin{aligned}
  w'(x,y) &= \frac{q'(x;y)}{p(x,y)} \\
  w'(v,x,y) &= \frac{q'(v,x;y)}{p(x,y)r'(v;x,y)} \\
  X', Y' &\sim p(x,y) \\
  V' &\sim r'(v; X', Y')
  \end{aligned}$};

\node[name=Vr, draw=black, yshift=2.5cm,
  label={[align=center]above:{\underline{Monte Carlo Upper Bound} $\mathbb{E}[\hat{H}_Y]$ \\ proposal $q(v,x;y)$\\ aux.\ proposal $r(v;x,y)$}}
] at ($(xm)!0.5!(xrr)$){$\begin{aligned}
  w(x,y)   &= \frac{p(x,y)}{q(x;y)} \\
  w(v,x,y) &= \frac{p(x,y)r(v;x,y)}{q(v,x;y)} \\
  Y &\sim p(x,y) \\
  V, X &\sim q(v, x; Y)
  \end{aligned}$};
\end{tikzpicture}
\captionsetup{belowskip=30pt, aboveskip=30pt}
\caption[]{Characterization of the estimation gaps of the upper
and lower bounds $\hat{H}_Y$ and $\check{H}_Y$ in~\cref{alg:entropy-upper-general,alg:entropy-lower-general}.
}
\label{fig:gaps}
\end{figure*}

\section{SPECIAL CASES OF EEVI}
\label{appx:eevi-variants}

\cref{alg:entropy-upper-general,alg:entropy-lower-general}
in the main text present a general version of Monte Carlo lower
and upper bounds for the entropy $H(Y) = -\E{\log p(Y)}$ using
the identities~\cref{eq:Hu-generic,eq:Hl-generic,eq:Hy-sandwich},
where the functions $w$ and $w'$ are as defined in~\cref{eq:w-extended-lo,eq:w-extended-hi}.
The general version of EEVI uses a proposal distribution $q(v, x; y)$
that is defined on an extended space $\mathcal{V} \times \mathcal{X}$
and parameterized by $\mathcal{Y}$, and the auxiliary proposals $r(v;
x,y)$ are defined on $\mathcal{V}$ and parameterized by $\mathcal{X}
\times \mathcal{Y}$.
This appendix rephrases the general version of EEVI in terms of bounds
on the negative entropy $-H(Y) = \E{\log p(Y)}$ and presents three
special cases described in~\cref{sec:aux-var-smc} of the main text.


\begin{enumerate}
\setlength{\intextsep}{0pt}
\setlength{\textfloatsep}{0pt}

\item \cref{alg:entropy-upper-general-appx,alg:entropy-lower-general-appx}
show Monte Carlo bounds on $\E{\log p(Y)}$ using a proposal
$q(v,x; y)$ and auxiliary proposal
$r(v;x,y)$ that have arbitrary auxiliary variables and are defined on
the extended state-spaces $\mathcal{V} \times \mathcal{X}$ and
$\mathcal{V}$, respectively.
The proposals used for the lower and upper bounds need not
be the same.
\Cref{fig:gaps} shows the estimation gap that results from using these
estimators of $\E{\log p(Y)}$ to estimate $H(Y) = -\E{\log p(Y)}$.

\item \cref{alg:entropy-lower-bound-basic,alg:entropy-upper-bound-basic}
  show Monte Carlo bounds for the case that the proposal $q$ has no auxiliary
  variables, so that $\mathcal{V} = \set{\omega}$ is a singleton and
  $r(v; x, y)$ is a dirac measure on $\omega$, as discussed in~\cref{sec:aux-var}.
  Proposals of this form may arise when $q$ is hand-constructed
  or learned via amortized variational inference.

\item \cref{alg:entropy-lower-bound-sir,alg:entropy-upper-bound-sir}
  show Monte Carlo bounds for the case that the proposal is sampling-importance
  resampling (SIR) with $P$ particles, using a base proposal $q_0(x;y)$ that has no
  auxiliary variables. The resulting proposal $q(v,x;y)$ and auxiliary
  proposal $r(v;x,y)$ are as in~\cref{example:sir} from the main text.

\item \cref{alg:entropy-lower-bound-sir-nested,alg:entropy-upper-bound-sir-nested}
  show Monte Carlo bounds using a base proposal $q_0(v, x;y)$ that has auxiliary
  variables with a corresponding base auxiliary proposal $r_0(v; x, y)$, and
  $P$ particles of SIR to estimate the marginal probability $q_0(x;y)$.
  For example, the base proposal $q_0$ and auxiliary proposal $r_0$ may
  follow the sequential Monte Carlo (SMC)
  scheme from~\cref{alg:smc-forward,alg:smc-reverse}, described in \cref{example:smc} of the main text.
  %
  %
  In this case, the overall proposal $q(v, x; y)$ and auxiliary proposal $r(v; x, y)$
  on the extended state space can be written as:
  \begin{align*}
    q(v_{1:P}, x; y) &= \frac{1}{P}\sum_{k=1}^P q_0(v_k, x; y) \prod_{\substack{t=1 \\ t\ne k}}^P r_0(v_t; x, y), \\
    r(v_{1:P}; x, y) &= \prod_{t=1}^P r_0(v_t; x, y).
  \end{align*}
  The extended weight~\cref{eq:w-extended-lo} from the main text,
  which appears in~\cref{alg:entropy-lower-bound-sir-nested} \cref{algline:entropy-lower-bound-sir-nested-weight},
  is then
  \begin{align}
  &w(v_{1:P}, x; y) \notag \\
  &= \frac
    {\displaystyle p(x,y)\prod_{k=1}^P r_0(v_k; x, y)}
    {\displaystyle\frac{1}{P}\sum_{k=1}^P q_0(v_k, x; y) \prod_{\substack{t=1 \\ t\ne k}}^P r_0(v_t; x, y)} \notag \\
  &= \frac{p(x,y)}{\displaystyle\frac{1}{P}\sum_{k=1}^P\frac{q_0(v_k,x;y)}{r_0(v_k;x,y)}} \label{eq:melanose}.
  \end{align}
  The extended weight~\cref{eq:w-extended-hi} from the main text,
  which appears in~\cref{alg:entropy-upper-bound-sir-nested} \cref{algline:entropy-upper-bound-sir-nested-weight},
  is the reciprocal of~\cref{eq:melanose}.
  The extended proposal $q(v_{1:P}, x; y)$ generates
  samples $(V_{1:P}, X)$
  as follows: \begin{itemize}[wide=0pt]
    \item sample $(V_0, X) \sim q_0(v, x; y)$;
    \item sample selection index $k \sim \mathrm{Uniform}(1 \dots P)$;
    \item set $V_k \gets V_0$;
    \item sample $V_j \sim r_0(v; X, y)$ from the base auxiliary proposal
    for $j = 1, \dots, k - 1, k+1, \dots, P$.
  \end{itemize}
  As for the extended auxiliary proposal $r(v_{1:P}; x, y)$,
  it generates $P$ i.i.d.\ samples from the base auxiliary
  proposal $r_0(v; x, y)$.
\end{enumerate}

\begin{figure*}[p]
\begin{minipage}[t]{.475\linewidth}
\begin{algorithm}[H]
\caption{Lower bound on $\E{\log p(Y)}$}
\label{alg:entropy-upper-general-appx}
\begin{algorithmic}[1]
\Require \begin{tabular}{l}
  Target distribution $p(x,y)$ \\
  Proposal distribution $q(v, x; y)$ \\
  Auxiliary proposal distribution $r(v; x, y)$ \\
  Number of samples $n$, $m$
  \end{tabular}
\For{$i=1\dots n$}
  \State $(\tilde{X}, Y) \sim p(x,y)$
  \For{$j=1\dots m$}
    \State $(V, X) \sim q(v,x; Y)$
    \State $\displaystyle t'_j \gets \log \frac{p(X, Y)r(V; X, Y)}{q(V, X; y)}$
  \EndFor
\State $t_i \gets \frac{1}{m} \sum_{j=1}^m t'_j$
\EndFor
\State \Return $\frac{1}{n} \sum_{i=1}^n t_i$
\end{algorithmic}
\end{algorithm}
\end{minipage}\hfill
\begin{minipage}[t]{.475\linewidth}
\begin{algorithm}[H]
\caption{Upper bound on $\E{\log p(Y)}$}
\label{alg:entropy-lower-general-appx}
\begin{algorithmic}[1]
\Require \begin{tabular}{l}
  Target distribution $p(x,y)$ \\
  Proposal distribution $q(v, x; y)$ \\
  Auxiliary proposal distribution $r(v; x, y)$ \\
  Number of samples $n$, $m$
  \end{tabular}
\For{$i=1\dots n$}
  \State $(X_1, Y) \sim p(x, y)$
  \State $(X_{2:m}) \sim \mathrm{MCMC}_{X_1} \mbox{ targeting } p(x \mid Y)$
  \For{$j=1 \dots m$}
    \State $V \sim r(v; X_j, Y)$
    \State $\displaystyle t'_j \gets \log \frac{q(V, X_j; Y)}{p(X_j, Y)r(V; X_j, Y)}$
  \EndFor
  \State $ t_i \gets - \frac{1}{m} \sum_{j=1}^m t'_j$
\EndFor
\State \Return $\frac{1}{n} \sum_{i=1}^n t_i$
\end{algorithmic}
\end{algorithm}
\end{minipage}

\begin{minipage}[t]{.475\linewidth}
\begin{algorithm}[H]
\captionsetup{skip=100pt}
\caption{Lower bound on $\E{\log p(Y)}$ using a proposal distribution
without auxiliary variables}
\label{alg:entropy-lower-bound-basic}
\begin{algorithmic}[1]
\Require \begin{tabular}{l}
  Target distribution $p(x, y)$ \\
  Proposal distribution $q(x; y)$ \\
  Number of samples $n$, $m$
  \end{tabular}
\For{$i=1\dots n$}
  \State $Y \sim p(x, y)$
  \For{$j=1\dots m$}
    \State $X\sim q(x; Y)$
    \State $\displaystyle t'_j \gets \log \frac{p(X, Y)}{q(X; y)}$
  \EndFor
\State $t_i \gets \frac{1}{m} \sum_{j=1}^m t'_j$
\EndFor
\State \Return $\frac{1}{n} \sum_{i=1}^n t_i$
\end{algorithmic}
\end{algorithm}
\end{minipage}\hfill
\begin{minipage}[t]{.475\linewidth}
\begin{algorithm}[H]
\caption{Upper bound on $\E{\log p(Y)}$ using a proposal distribution
without auxiliary variables}
\label{alg:entropy-upper-bound-basic}
\begin{algorithmic}[1]
\Require \begin{tabular}{l}
  Target distribution $p(x, y)$ \\
  Proposal distribution $q(x; y)$ \\
  Number of samples $n$, $m$
  \end{tabular}
\For{$i=1\dots n$}
  \State $(X_1, Y) \sim p(x, y)$
  \State $(X_{2:m}) \sim \mathrm{MCMC}_{X_1} \mbox{ targeting } p(x \mid Y)$
  \For{$j=1 \dots m$}
    \State $\displaystyle t'_j \gets \log \frac{q(X_j; Y)}{p(X_j, Y)}$
  \EndFor
  \State $ t_i \gets - \frac{1}{m} \sum_{j=1}^m t'_j$
\EndFor
\State \Return $\frac{1}{n} \sum_{i=1}^n t_i$
\end{algorithmic}
\end{algorithm}
\end{minipage}
\end{figure*}

\begin{figure*}[p]
\begin{minipage}[t]{.475\linewidth}
\begin{algorithm}[H]
\caption{Lower bound on $\E{\log p(Y)}$
using SIR and a base proposal without auxiliary variables}
\label{alg:entropy-lower-bound-sir}
\begin{algorithmic}[1]
\Require \begin{tabular}{l}
  Target distribution $p(x,y)$ \\
  Base proposal distribution $q_0(x; y)$ \\
  Number of samples $n$, $m$, $P$
  \end{tabular}
\For{$i=1\dots n$}
  \State $(\tilde{X}, Y) \sim p(x, y)$
  \For{$j=1\dots m$}
    \For{$k=1\dots P$}
      \State $X\sim q_0(x; Y)$
      \State $\displaystyle \xi_k \gets \frac{p(X, Y)}{q_0(X; Y)}$
    \EndFor
    \State $ t'_j \gets \log \left[\frac{1}{P}\sum_{k=1}^P \xi_k \right]$
  \EndFor
\State $t_i \gets \frac{1}{m} \sum_{j=1}^m t'_j$
\EndFor
\State \Return $\frac{1}{n} \sum_{i=1}^n t_i$
\end{algorithmic}
\end{algorithm}
\end{minipage}\hfill
\begin{minipage}[t]{.475\linewidth}
\begin{algorithm}[H]
\caption{Upper bound on $\E{\log p(Y)}$
using SIR and a base proposal without auxiliary variables}
\label{alg:entropy-upper-bound-sir}
\begin{algorithmic}[1]
\Require \begin{tabular}{l}
  Target distribution $p(x,y)$ \\
  Base proposal distribution $q_0(x; y)$ \\
  Number of samples $n$, $m$, $P$
  \end{tabular}
\For{$i=1\dots n$}
  \State $(X_1, Y) \sim p(x, y)$
  \State $(X_{2:m}) \sim \mathrm{MCMC}_{X_1} \mbox{ targeting } p(x \mid Y)$
  \For{$j=1 \dots m$}
    \State $X'_1 \gets X_j$
    \For{$k =2 \dots P$}
      \State $X'_k \sim q_0(x; Y)$
    \EndFor
    \For{$k = 1 \dots P$}
      \State $\displaystyle \xi_k \gets \frac{p(X'_k, Y)}{q_0(X'_k; y)}$
    \EndFor
    \State $t'_j \gets \log\left[ \textstyle {1}/{\frac{1}{P}\sum_{k=1}^P \xi_k} \right]$
  \EndFor
  \State $ t_i \gets - \frac{1}{m} \sum_{j=1}^m t'_j$
\EndFor
\State \Return $ \frac{1}{n} \sum_{i=1}^n t_i$
\end{algorithmic}
\end{algorithm}
\end{minipage}

\begin{minipage}[t]{.475\linewidth}
\begin{algorithm}[H]
\caption{Lower bound on $\E{\log p(Y)}$
using SIR and a base proposal with auxiliary variables}
\label{alg:entropy-lower-bound-sir-nested}
\begin{algorithmic}[1]
\Require \begin{tabular}{l}
  Target distribution $p(x,y)$ \\
  Base proposal distribution $q_0(v, x; y)$ \\
  Base auxiliary proposal dist $r_0(v; x, y)$ \\
  Number of samples $n$, $m$, $P$
  \end{tabular}
\For{$i=1\dots n$}
  \State $(\tilde{X}, Y) \sim p(x, y)$
  \For{$j=1\dots m$}
    \State $(V_1, X) \sim q_0(v, x; Y)$
    \For{$k=2\dots P$}
      \State $V_k \sim r_0(v; X, Y)$
    \EndFor
    \For{$k=1\dots P$}
      \State $\xi_k \gets \displaystyle \frac{q_0(V_k, X; Y)}{r_0(V_k; X, Y)}$
    \EndFor
    \State $\displaystyle t'_j \gets \log \frac{p(X, Y)}{ \frac{1}{P}\sum_{k=1}^P \xi_k}$
    \label{algline:entropy-lower-bound-sir-nested-weight}
  \EndFor
\State $t_i \gets \frac{1}{m} \sum_{j=1}^m t'_j$
\EndFor
\State \Return $\frac{1}{n} \sum_{i=1}^n t_i$
\end{algorithmic}
\end{algorithm}
\end{minipage}\hfill
\begin{minipage}[t]{.475\linewidth}
\begin{algorithm}[H]
\caption{Upper bound on $\E{\log p(Y)}$
using SIR and a base proposal with auxiliary variables}
\label{alg:entropy-upper-bound-sir-nested}
\begin{algorithmic}[1]
\Require \begin{tabular}{l}
  Target distribution $p(x,y)$ \\
  Base proposal distribution $q_0(v, x; y)$ \\
  Base auxiliary proposal dist $r_0(v; x, y)$ \\
  Number of samples $n$, $m$, $P$
  \end{tabular}
\For{$i=1\dots n$}
  \State $(X_1, Y) \sim p(x, y)$
  \State $(X_{2:m}) \sim \mathrm{MCMC}_{X_1} \mbox{ targeting } p(x \mid Y)$
  \For{$j=1\dots m$}
    \For{$k=1\dots P$}
      \State $V_k \sim r_0(v; X_j, Y)$
      \State $\xi_k \gets \displaystyle \frac{q_0(V_{k}, X_{j}; Y)}{r_0(V_k; X_j, Y)}$
    \EndFor
    \State $\displaystyle t'_j \gets \log \frac{ \frac{1}{P}\sum_{k=1}^P \xi_k}{p(X_{j}, Y)}$
    \label{algline:entropy-upper-bound-sir-nested-weight}
  \EndFor
\State $ t_i \gets - \frac{1}{m} \sum_{j=1}^m t'_j$
\EndFor
\State \Return $\frac{1}{N} \sum_{i=1}^N t_i$
\end{algorithmic}
\end{algorithm}
\end{minipage}
\end{figure*}

\section{DEFERRED PROOFS}
\label{appx:proofs}

This appendix establishes~\crefrange{eq:is-ratio-unbisaed}{eq:is-log-ratio-ead}
from~\cref{sec:aux-var} in the main text using elementary arguments.
The basic setup is as follows: let $(\mathcal{X}, \mathcal{B}(\mathcal{X}))$
be a measurable space and
suppose that $G(\diff x)$ is a probability measure
that admits a density $g(x)$
with respect to some $\sigma$-finite measure $\nu(\diff{x})$.
Therefore,
\begin{align}
G(A) = \int_{\mathcal{X}} \mathbb{I}[x \in A] g(x) \nu(\diff x)
&& (A \in \mathcal{B}(\mathcal{X})),
\end{align}
and for any bounded measurable function $\phi: \mathcal{X} \to \mathbb{R}$,
\begin{align}
\int_{\mathcal{X}}\phi(x) G(\diff x) = \int_{\mathcal{X}} \phi(x) g(x) \nu(\diff x).
\end{align}
Assume now that $g(x) = \tilde{g}(x)/Z_g$ is known only up to a
normalizing constant

\begin{proposition}
\label{prop:Zg-norm}
The normalizing constant $Z_g$ of $g$
satisfies $Z_g = \int_{\mathcal{X}}\tilde{g}(x)\nu(\diff{x})$.
\end{proposition}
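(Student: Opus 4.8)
The plan is to use the single hypothesis that $g$ is a probability density with respect to $\nu$, which forces the total mass of $G$ to equal one, and then simply unwind the definition $g = \tilde g / Z_g$. First I would invoke the displayed change-of-variables identity $\int_{\mathcal{X}}\phi(x)\,G(\diff x) = \int_{\mathcal{X}}\phi(x) g(x)\,\nu(\diff x)$ with the bounded measurable choice $\phi \equiv 1$; this gives $G(\mathcal{X}) = \int_{\mathcal{X}} g(x)\,\nu(\diff x)$. Since $G$ is assumed to be a probability measure, the left-hand side equals $1$, so $\int_{\mathcal{X}} g(x)\,\nu(\diff x) = 1$.

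Next I would substitute $g(x) = \tilde g(x)/Z_g$ into this integral and pull the constant $1/Z_g$ — a fixed positive real not depending on $x$ — outside by linearity of the $\nu$-integral, obtaining $1 = (1/Z_g)\int_{\mathcal{X}} \tilde g(x)\,\nu(\diff x)$. Multiplying through by $Z_g$ yields $Z_g = \int_{\mathcal{X}} \tilde g(x)\,\nu(\diff x)$, which is exactly the claim.

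The only technical point worth a sentence is legitimacy of pulling the scalar out: $\tilde g = Z_g\, g$ is nonnegative and measurable, so its $\nu$-integral is well defined in $[0,\infty]$, and $\sigma$-finiteness of $\nu$ together with the computation above shows it is in fact finite (equal to $Z_g$). There is essentially no obstacle here — the statement merely makes the defining property of the normalizing constant explicit — so the ``hard part'' is just this bit of measure-theoretic bookkeeping, all of whose ingredients (measurability of $\tilde g$, $\sigma$-finiteness of $\nu$, $G$ being a probability measure) are already granted in the setup preceding the proposition.
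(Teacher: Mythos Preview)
Your proposal is correct and follows essentially the same approach as the paper: write $1 = G(\mathcal{X}) = \int g\,\diff\nu = \int (\tilde g/Z_g)\,\diff\nu$ and rearrange. The extra sentence you add about legitimacy of pulling out the scalar is more care than the paper expends, but not a different argument.
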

\begin{proof}
\begin{align}
1 = G(\mathcal{X})
  &= \int_{\mathcal{X}} g(x) \nu(\diff x) \\
  &= \int_{\mathcal{X}} [{\tilde{g}(x)}/{Z_g}] \nu(\diff x) \\
  &= \int_{\mathcal{X}} [{\tilde{g}(x)}/{Z_g}] \nu(\diff x).
\end{align}
\end{proof}

\begin{proposition}
\label{prop:G-measure-zero}
Let $\mathcal{G} \defas \set{x \mid g(x) = 0}$ be the set of values for which
$g$ is zero and let $G' \defas \mathcal{X} \setminus \mathcal{G}$
be its complement.
Then $G(\mathcal{G}) = 0$.
\end{proposition}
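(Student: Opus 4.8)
The plan is to compute $G(\mathcal{G})$ directly from the density representation of $G$ with respect to $\nu$. First I would observe that $\mathcal{G} = g^{-1}(\set{0})$ is a measurable set, since $g$ is a measurable function and $\set{0} \in \mathcal{B}(\mathbb{R})$; hence $G(\mathcal{G})$ is well-defined and, by the definition of $G$ via its density, equals $\int_{\mathcal{X}} \mathbb{I}[x \in \mathcal{G}]\, g(x)\, \nu(\diff x)$.

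The key step is to notice that the integrand $\mathbb{I}[x \in \mathcal{G}]\, g(x)$ is identically zero on all of $\mathcal{X}$: for $x \in \mathcal{G}$ the factor $g(x)$ vanishes by the very definition of $\mathcal{G}$, while for $x \notin \mathcal{G}$ the indicator $\mathbb{I}[x \in \mathcal{G}]$ vanishes. Therefore the integral is $0$ irrespective of the reference measure $\nu$, which gives $G(\mathcal{G}) = 0$. (Equivalently, $G(G') = G(\mathcal{X}) - G(\mathcal{G}) = 1$, so $g$ is supported on $G'$ up to a $G$-null set, as one expects.)

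I do not anticipate any substantive obstacle: the argument is a one-line manipulation of the integral, and the only point requiring a word of justification is the measurability of $\mathcal{G}$, which is immediate from measurability of $g$.
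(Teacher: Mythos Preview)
Your argument is correct and mirrors the paper's own proof almost verbatim: observe that $\mathcal{G}$ is measurable since $g$ is, then compute $G(\mathcal{G}) = \int_{\mathcal{X}} \mathbb{I}[x \in \mathcal{G}]\, g(x)\, \nu(\diff x)$ and note the integrand vanishes identically. There is nothing to add.
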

\begin{proof}
As $g$ is a measurable function, the set $\mathcal{G}$ is also
measurable and has $G$-probability
\begin{align}
G(\mathcal{G})
  = \int_{\mathcal{X}}\mathbb{I}[x \in \mathcal{G}]g(x) \nu(\diff{x}) 
  &= \int_{\mathcal{G}}g(x) \nu(\diff{x})\\
  &= \int_{\mathcal{G}}0 \nu(\diff{x})\\
  &= 0.
\end{align}
\end{proof}

\begin{corollary}
\label{corr:phi-arbitrary}
For measurable function $\phi: \mathcal{X} \to \mathbb{R}$, we have
\begin{align}
\int_{\mathcal{X}}\phi(x)G(\diff{x})
  = \int_{\mathcal{G}'}\phi(x)G(\diff{x}).
\end{align}
\end{corollary}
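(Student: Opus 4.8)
The plan is to decompose the integral over $\mathcal{X}$ into its restriction to $\mathcal{G}$ and to $\mathcal{G}'$, then show the $\mathcal{G}$ piece vanishes. First I would write, using additivity of the integral over the disjoint decomposition $\mathcal{X} = \mathcal{G} \sqcup \mathcal{G}'$,
\begin{align*}
\int_{\mathcal{X}}\phi(x)G(\diff{x})
  = \int_{\mathcal{G}}\phi(x)G(\diff{x}) + \int_{\mathcal{G}'}\phi(x)G(\diff{x}),
\end{align*}
which is valid provided the left-hand integral exists (I would note this is implicit, as in the ambient setup for $\int \phi \diff{G}$, or simply phrase the statement as an identity that holds whenever either side is defined). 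The task then reduces to showing $\int_{\mathcal{G}}\phi(x)G(\diff{x}) = 0$.

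Next I would unfold the measure $G$ in terms of its density: on $\mathcal{G}$, by definition $g(x) = 0$, so
\begin{align*}
\int_{\mathcal{G}}\phi(x)G(\diff{x})
  = \int_{\mathcal{X}}\mathbb{I}[x \in \mathcal{G}]\,\phi(x)\,g(x)\,\nu(\diff{x})
  = \int_{\mathcal{G}}\phi(x)\cdot 0\,\nu(\diff{x})
  = 0.
\end{align*}
This mirrors exactly the computation in the proof of \cref{prop:G-measure-zero}, the only difference being the extra factor $\phi(x)$, which is harmless since it is multiplied by the identically-zero integrand $g$ on $\mathcal{G}$. Combining the two displays gives the claim. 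Alternatively, one could invoke \cref{prop:G-measure-zero} directly: since $G(\mathcal{G}) = 0$, the function $\mathbb{I}[x \in \mathcal{G}]\phi(x)$ is zero $G$-almost everywhere, hence integrates to zero against $G$.

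The main subtlety — not really an obstacle — is the measurability and integrability bookkeeping: one should confirm $\mathcal{G}$ and $\mathcal{G}'$ are measurable (already established in \cref{prop:G-measure-zero}, since $g$ is measurable), and note that for a general (not necessarily bounded or integrable) measurable $\phi$ the identity still holds in the sense that the $\mathcal{G}$-part contributes nothing to either the positive or negative part of the integral. I would keep the write-up to the three lines above plus a sentence pointing to \cref{prop:G-measure-zero}, since the paper explicitly flags these as "elementary arguments."
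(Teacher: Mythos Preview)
Your proposal is correct and matches the paper's approach: the paper states this as a corollary of \cref{prop:G-measure-zero} with no proof given, treating it as immediate, and your write-up is exactly the elementary argument one would supply to fill in that gap.
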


Let $H$ be a probability measure that is absolutely continuous with respect
to $\nu$ with density $h(x) = \tilde{h}(x)/Z_h$.
\cref{prop:Zg-norm} implies that
$Z_h = \int_{\mathcal{X}}\tilde{h}(x)\nu\diff{x}$.
Further, let
\begin{align}
\label{eq:w-unnormalized-ratio}
\tilde{w}(x) &\defas \begin{cases}
  \displaystyle\frac{\tilde{h}(x)}{\tilde{g}(x)} & \mbox{if } x \in \mathcal{G}', \\
  1 & \mbox{if } x \in \mathcal{G},
\end{cases}\\
w(x) &\defas \frac{Z_h}{Z_g}\tilde{w}(x) = \begin{cases}
  \displaystyle\frac{h(x)}{g(x)} & \mbox{if } x \in \mathcal{G}', \\
  1 & \mbox{if } x \in \mathcal{G}.
\end{cases}
\end{align}

\begingroup
\allowdisplaybreaks
\begin{proposition}
\label{prop:w-density}
If $H \ll G$ then $w(x)$ is a density of $H$ with respect to $G$.
\end{proposition}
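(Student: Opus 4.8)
To show that $w$ is a density of $H$ with respect to $G$, I would verify directly that $H(A) = \int_A w(x)\,G(\diff{x})$ for every $A \in \mathcal{B}(\mathcal{X})$, and note in passing that $w$ is nonnegative and measurable (it is the pointwise quotient $\tilde h/\tilde g$ scaled by $Z_h/Z_g$ on the measurable set $\mathcal{G}'$ and the constant $1$ on the measurable set $\mathcal{G}$). The computation is short; the only place where the hypothesis $H \ll G$ enters is in discarding the set $\mathcal{G} = \set{x : g(x) = 0}$.

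\textbf{Key steps.} First, since $G$ admits the density $g$ with respect to $\nu$, I would rewrite $\int_A w(x)\,G(\diff{x}) = \int_A w(x) g(x)\,\nu(\diff{x})$. Second, I would split the domain of integration as $A = (A \cap \mathcal{G}') \cup (A \cap \mathcal{G})$: on $A \cap \mathcal{G}$ the integrand $w(x)g(x)$ vanishes because $g(x) = 0$ there, while on $A \cap \mathcal{G}'$ we have $w(x)g(x) = (h(x)/g(x))\,g(x) = h(x)$. Hence $\int_A w(x)\,G(\diff{x}) = \int_{A \cap \mathcal{G}'} h(x)\,\nu(\diff{x}) = H(A \cap \mathcal{G}')$, using that $H$ has density $h$ with respect to $\nu$. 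Third, I would invoke \cref{prop:G-measure-zero}, which gives $G(\mathcal{G}) = 0$, so that $G(A \cap \mathcal{G}) = 0$; then absolute continuity $H \ll G$ yields $H(A \cap \mathcal{G}) = 0$, whence $H(A \cap \mathcal{G}') = H(A)$. Combining, $\int_A w(x)\,G(\diff{x}) = H(A)$ for all measurable $A$, which is exactly the statement that $w$ is a density of $H$ with respect to $G$.

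\textbf{Main obstacle.} There is no serious obstacle here: the argument is a routine change-of-variables between the reference measure $\nu$ and the measure $G$, plus careful bookkeeping on the null set $\mathcal{G}$ where $g$ vanishes. The one subtlety worth stating explicitly is that the value assigned to $w$ on $\mathcal{G}$ (namely $1$) is immaterial, precisely because that set is $G$-null and, by $H \ll G$, also $H$-null; this is the only point at which the absolute-continuity hypothesis is used.
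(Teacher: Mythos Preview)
Your proposal is correct and follows essentially the same approach as the paper: both arguments reduce $\int_A w\,\diff G$ to an integral with respect to $\nu$, restrict attention to $\mathcal{G}'$ where $w(x)g(x)=h(x)$, and then use $G(\mathcal{G})=0$ together with $H\ll G$ to identify the result with $H(A)$. The only cosmetic difference is that the paper first invokes \cref{corr:phi-arbitrary} to drop $\mathcal{G}$ from the $G$-integral before passing to $\nu$, whereas you pass to $\nu$ first and then split $A$ explicitly; the content is the same.
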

\begin{proof}
Since $H \ll G$ and $G(\mathcal{G}) = 0$ it also holds that $H(\mathcal{G}) = 0$.
Thus~\cref{corr:phi-arbitrary} also applies when taking expectations
under $H$.
For any $A \in \mathcal{B}(\mathcal{X})$
\begin{align}
&\int_{\mathcal{X}}\mathbb{I}[x \in A]w(x)G(\diff{x}) \\
  &= \int_{\mathcal{G}'}\mathbb{I}[x \in A]w(x)G(\diff{x}) \\
  &= \int_{\mathcal{G}'}\mathbb{I}[x \in A][h(x)/g(x)]G(\diff{x}) \\
  &= \int_{\mathcal{G}'}\mathbb{I}[x \in A][h(x)/g(x)]g(x)\nu(\diff{x}) \\
  &= \int_{\mathcal{G}'}\mathbb{I}[x \in A]h(x)\nu(\diff{x}) \\
  &= H(A).
\end{align}
\end{proof}
\endgroup

\begin{corollary}
\label{corr:expected-w-unity}
If $H \ll G$ and $X \sim G$, then $\E{w(X)} = 1$.
\end{corollary}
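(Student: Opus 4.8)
The plan is to derive this directly from \cref{prop:w-density}, which already does all the work: it shows that $w$ is a density of $H$ with respect to $G$, i.e., for every $A \in \mathcal{B}(\mathcal{X})$ we have $\int_{\mathcal{X}} \mathbb{I}[x \in A] w(x) G(\diff{x}) = H(A)$. The only remaining step is to instantiate this identity at $A = \mathcal{X}$.

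Concretely, first I would invoke the hypothesis $H \ll G$ to apply \cref{prop:w-density}. Then, taking $A = \mathcal{X}$ so that $\mathbb{I}[x \in A] \equiv 1$, I would write
\begin{align*}
\E{w(X)} = \int_{\mathcal{X}} w(x) G(\diff{x}) = \int_{\mathcal{X}} \mathbb{I}[x \in \mathcal{X}] w(x) G(\diff{x}) = H(\mathcal{X}) = 1,
\end{align*}
where the first equality is the definition of expectation under $X \sim G$, the third equality is the conclusion of \cref{prop:w-density}, and the last equality holds because $H$ is a probability measure.

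There is no real obstacle here: the corollary is an immediate specialization of the preceding proposition. If one preferred a self-contained argument, one could instead replay the chain of equalities in the proof of \cref{prop:w-density} with the indicator suppressed, using \cref{corr:phi-arbitrary} to restrict the integral to $\mathcal{G}'$, substituting $w(x) = h(x)/g(x)$ there, rewriting $G(\diff{x}) = g(x)\nu(\diff{x})$, cancelling $g(x)$, and recognizing the result as $\int_{\mathcal{G}'} h(x)\nu(\diff{x}) = H(\mathcal{X}) = 1$. Either route is routine.
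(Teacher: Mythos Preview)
Your proposal is correct and matches the paper's approach: the corollary is stated immediately after \cref{prop:w-density} with no separate proof, precisely because it is the specialization $A = \mathcal{X}$ of that proposition. Your one-line derivation $\E{w(X)} = \int_{\mathcal{X}} w(x)\,G(\diff{x}) = H(\mathcal{X}) = 1$ is exactly the intended reading.
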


\Cref{prop:is-ratio-unbisaed} establishes~\cref{eq:is-ratio-unbisaed} from
the main text.
\begin{proposition}
\label{prop:is-ratio-unbisaed}
If $H \ll G$ and $X \sim G$, then $\mathbb{E}[\tilde{w}(X)] = Z_h/Z_g$.
\end{proposition}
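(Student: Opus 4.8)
The plan is to evaluate $\E{\tilde w(X)}$ for $X \sim G$ directly as an integral and simplify, using only facts already established. Starting from $\E{\tilde w(X)} = \int_{\mathcal{X}} \tilde w(x)\, G(\diff x)$, I would first replace the domain $\mathcal{X}$ by $\mathcal{G}'$ (the complement of the zero set $\mathcal{G}$ of $g$): this is justified by \cref{corr:phi-arbitrary}, which applies because $G(\mathcal{G}) = 0$ by \cref{prop:G-measure-zero}. On $\mathcal{G}'$ we have $\tilde w(x) = \tilde h(x)/\tilde g(x)$, and since $G$ has density $g(x) = \tilde g(x)/Z_g$ with respect to $\nu$, the $\tilde g(x)$ factors cancel and we are left with $\E{\tilde w(X)} = \tfrac{1}{Z_g}\int_{\mathcal{G}'} \tilde h(x)\, \nu(\diff x)$.

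It then remains to identify $\int_{\mathcal{G}'}\tilde h(x)\,\nu(\diff x)$ with $Z_h$. This is exactly where the hypothesis $H \ll G$ is used: combined with $G(\mathcal{G}) = 0$ it forces $H(\mathcal{G}) = 0$, hence $\int_{\mathcal{G}}\tilde h(x)\,\nu(\diff x) = Z_h\, H(\mathcal{G}) = 0$, and therefore $\int_{\mathcal{G}'}\tilde h(x)\,\nu(\diff x) = \int_{\mathcal{X}}\tilde h(x)\,\nu(\diff x) = Z_h$ by \cref{prop:Zg-norm} applied to $h$. Substituting back gives $\E{\tilde w(X)} = Z_h/Z_g$, as claimed. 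A shorter alternative is to note that $\tilde w$ equals the normalized weight $w$, whose expectation under $G$ is $1$ by \cref{corr:expected-w-unity}, rescaled by the deterministic constant $Z_h/Z_g$ (i.e.\ $\tilde w = (Z_h/Z_g)\,w$ on $\mathcal{G}'$), so the result follows immediately by linearity of expectation.

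I do not anticipate a genuinely hard step: the only thing requiring care is the measure-zero bookkeeping around $\mathcal{G}$. One must check that the ratio $\tilde h/\tilde g$ is only ever evaluated where $\tilde g \neq 0$ (which is precisely why $\tilde w$ is defined piecewise, with value $1$ on $\mathcal{G}$), and that no $H$-mass is lost by discarding $\mathcal{G}$ -- both of which follow from $G(\mathcal{G}) = 0$ together with absolute continuity. Everything else is a change of variables in the normalizing constants and linearity of expectation.
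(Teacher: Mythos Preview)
Your proposal is correct, and in fact your ``shorter alternative'' is exactly the paper's proof: the paper invokes $\tilde w(x) = (Z_h/Z_g)\,w(x)$ and then applies \cref{corr:expected-w-unity} together with linearity of expectation. Your first, more explicit route via the integral over $\mathcal{G}'$ and the identification $\int_{\mathcal{G}'}\tilde h\,\nu(\diff x)=Z_h$ is also correct and makes the role of $H\ll G$ slightly more visible, but it is essentially the same argument unrolled one level (it is what underlies the proof of \cref{prop:w-density} and hence \cref{corr:expected-w-unity}).
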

\begin{proof}
\Cref{eq:w-unnormalized-ratio} implies that
$\tilde{w}(x) = [Z_h/Z_g]w(x)$ for all $x \in \mathcal{X}$.
Applying~\cref{corr:expected-w-unity}:
\begin{align}
\E{\tilde{w}(X)}
  &= \E{[{Z_h}/{Z_g}]w(X)} \\
  &= [Z_h/Z_g]\E{w(X)} \\
  &= Z_h/Z_g.
\end{align}
\end{proof}

\Cref{prop:w-as-positive,prop:is-log-ratio-biased}
establish~\cref{eq:is-log-ratio-biased} from
the main text.

\begin{proposition}
\label{prop:w-as-positive}
If $G \ll H$ then $\tilde{w}(x)$ is $G$-almost surely positive.
\end{proposition}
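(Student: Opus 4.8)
The plan is to show that the set on which $\tilde{w}$ vanishes carries no $G$-mass. First I would note that since $g$ and $h$ are densities with respect to $\nu$, both $\tilde{g} = Z_g g$ and $\tilde{h} = Z_h h$ are nonnegative, so the definition in \cref{eq:w-unnormalized-ratio} already gives $\tilde{w}(x) \ge 0$ for every $x$; it therefore suffices to prove that $G(\set{x : \tilde{w}(x) = 0}) = 0$.

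Next I would split along the two branches of \cref{eq:w-unnormalized-ratio}. On $\mathcal{G}$ we have $\tilde{w}(x) = 1 \ne 0$, so the zero set of $\tilde{w}$ is contained in $\mathcal{G}'$. On $\mathcal{G}'$ the denominator $\tilde{g}(x)$ is nonzero (equivalently $g(x) > 0$), so $\tilde{w}(x) = 0$ precisely when $\tilde{h}(x) = 0$, equivalently $h(x) = 0$. Writing $\mathcal{H} \defas \set{x : h(x) = 0}$, this yields the inclusion $\set{x : \tilde{w}(x) = 0} \subseteq \mathcal{H}$.

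I would then invoke \cref{prop:G-measure-zero} with the roles of $g$ and $h$ interchanged: since $H$ admits the density $h$ with respect to $\nu$, the identical argument gives $H(\mathcal{H}) = 0$. Finally, because $G \ll H$ by hypothesis, $H(\mathcal{H}) = 0$ forces $G(\mathcal{H}) = 0$, and monotonicity of $G$ combined with the inclusion above gives $G(\set{x : \tilde{w}(x) = 0}) = 0$, i.e., $\tilde{w}$ is $G$-almost surely positive.

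I do not anticipate any real obstacle here; the only things to be careful about are the bookkeeping in the case split and confirming that \cref{prop:G-measure-zero} genuinely applies to $H$ (it does, since $H$ was assumed absolutely continuous with respect to $\nu$ with density $h$). One could instead phrase the whole argument in terms of $w$ rather than $\tilde{w}$, since these differ by the strictly positive constant $Z_h/Z_g$, but working directly with $\tilde{w}$ keeps the case split aligned with \cref{eq:w-unnormalized-ratio}.
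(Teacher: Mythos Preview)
Your argument is correct and follows essentially the same line as the paper's proof: both identify that the zero set of $\tilde{w}$ lies inside $\set{x : h(x) = 0}$, observe this set has $H$-measure zero, and then use $G \ll H$ to conclude it has $G$-measure zero. The only cosmetic difference is that the paper frames this as a proof by contradiction (assuming a set $A$ with $G(A) > 0$ on which $\tilde{w}$ vanishes and deriving $H(A) = 0$), whereas you give the direct version; your explicit invocation of \cref{prop:G-measure-zero} for $H$ and the preliminary remark that $\tilde{w} \ge 0$ are minor clarifications but not a different route.
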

\begin{proof}
Suppose towards a contradiction that there exists a measurable set $A$ such
that $G(A) > 0$ and $\tilde{w}(x) = 0$ for $x \in A$.
From~\cref{eq:w-unnormalized-ratio}, it must be that $h(x) = 0$ whenever $x \in A$.
But then $H(A) = \int_{A}h(x) H(\diff{x}) = 0$, a contradiction to $G \ll H$.
\end{proof}

\begin{proposition}
\label{prop:is-log-ratio-biased}
If $G \ll H$ and $X \sim G$, then
  $\E{\log \tilde{w}(X)} = \log(Z_h/Z_g) - \KL{G}{H}$.
\end{proposition}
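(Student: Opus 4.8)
The plan is to compute the integral $\int_{\mathcal{X}} \log\tilde{w}(x)\,G(\diff x)$ directly, by first throwing away the part of the space where $g$ vanishes and then substituting the factorization $\tilde{w} = (Z_h/Z_g)\,(h/g)$ that holds on the support of $g$.

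First I would note that $\log\tilde{w}$ is $G$-almost surely well-defined: since $G \ll H$, Proposition~\ref{prop:w-as-positive} says $\tilde{w}$ is $G$-a.s.\ positive. Then I would apply Corollary~\ref{corr:phi-arbitrary} with $\phi = \log\tilde{w}$ to replace the integral over $\mathcal{X}$ by one over $\mathcal{G}' = \{x : g(x) > 0\}$. On $\mathcal{G}'$, definition~\eqref{eq:w-unnormalized-ratio} gives $\tilde{w}(x) = \tilde{h}(x)/\tilde{g}(x) = (Z_h/Z_g)\,h(x)/g(x)$, so $\log\tilde{w}(x) = \log(Z_h/Z_g) + \log h(x) - \log g(x)$ there.

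Next I would split the resulting integral. The constant piece contributes $\log(Z_h/Z_g)\cdot G(\mathcal{G}') = \log(Z_h/Z_g)$, because $G(\mathcal{G}) = 0$ by Proposition~\ref{prop:G-measure-zero}. The remaining piece $\int_{\mathcal{G}'}\big(\log h(x) - \log g(x)\big)\,G(\diff x)$ I would rewrite using $G(\diff x) = g(x)\,\nu(\diff x)$ on $\mathcal{G}'$, turning it into $-\int_{\mathcal{G}'}\log\!\big(g(x)/h(x)\big)\,g(x)\,\nu(\diff x)$, which is exactly $-\KL{G}{H}$ (the definition of KL divergence restricting the integration to $\{g>0\}$ automatically). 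Combining the two pieces yields the claimed identity.

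The point requiring care — more a bookkeeping matter than a real obstacle — is well-definedness. Here I would observe that $G \ll H$ forces $g = 0$ $\nu$-a.e.\ on $\{h = 0\}$ (that set is $H$-null, hence $G$-null, i.e.\ $\int_{\{h=0\}} g\,\diff\nu = 0$), so $\log(g/h)$ is finite $\nu$-a.e.\ on $\mathcal{G}'$; and the elementary bound $\log t \le t - 1$ with $t = h(x)/g(x)$ gives $\big(\log(g(x)/h(x))\big)^- \le h(x)/g(x)$ on $\mathcal{G}'$, so $\int_{\mathcal{G}'}\big(\log(g/h)\big)^- g\,\diff\nu \le \int_{\mathcal{G}'} h\,\diff\nu \le 1 < \infty$. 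Hence $\KL{G}{H}\in[0,\infty]$ and $\E{\log\tilde{w}(X)}\in[-\infty,\log(Z_h/Z_g)]$ are both well-defined, the identity holds in the extended reals, and it is finite precisely when $\KL{G}{H} < \infty$ — matching the remark following~\eqref{eq:is-log-ratio-biased} in the main text.
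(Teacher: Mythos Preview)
Your proof is correct and follows essentially the same route as the paper's: both use the factorization $\tilde{w}(x) = (Z_h/Z_g)\,w(x)$ on $\mathcal{G}'$ (equivalently $\log\tilde{w} = \log(Z_h/Z_g) + \log(h/g)$), then identify $\E{\log w(X)} = -\KL{G}{H}$. The paper's version is a terse three-line computation invoking the same Proposition~\ref{prop:w-as-positive}, while you additionally spell out the restriction to $\mathcal{G}'$ via Corollary~\ref{corr:phi-arbitrary} and the integrability bookkeeping ensuring the identity holds in the extended reals --- extra rigor the paper omits, but not a different idea.
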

\begin{proof}
\Cref{prop:w-as-positive} and
the definition $w(x)$ in~\cref{eq:w-unnormalized-ratio}
together imply that $\KL{G}{H} = - \E{\log w(X)}$ for $X \sim G$.
%
%
Since $G \ll H$, we have
\begin{align}
\E{\log \tilde{w}(X)}
  &= \E{\log \frac{Z_h}{Z_g} w(X)} \\
  &= \log(Z_h/Z_g) + \E{\log w(X)} \\
  &= \log(Z_h/Z_g) - \KL{G}{H}.
\end{align}
\end{proof}

\begin{proposition}
\label{prop:is-log-ratio-var}
If $G \ll H$ and $X \sim G$, then
$\mathrm{Var}\left[\log \tilde{w}(X) \right]
  = \E{ \log^2 w(X) } - \left(\KL{G}{H}\right)^2$.
\end{proposition}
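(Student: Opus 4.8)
The plan is to reduce the claim about $\tilde{w}$ to the corresponding statement for the normalized ratio $w$, exploiting that the two differ only by the multiplicative constant $Z_h/Z_g$, which becomes an additive constant under the logarithm and therefore drops out of the variance. First I would invoke \cref{prop:w-as-positive} (using the hypothesis $G \ll H$) to guarantee that $\tilde{w}(X)$, hence also $w(X) = (Z_g/Z_h)\tilde{w}(X)$, is $G$-almost surely positive, so that $\log\tilde{w}(X)$ and $\log w(X)$ are well-defined $G$-a.s. By the definition in \cref{eq:w-unnormalized-ratio}, we have $\log\tilde{w}(X) = \log(Z_h/Z_g) + \log w(X)$ on a set of full $G$-measure, so translation-invariance of variance gives $\V{\log\tilde{w}(X)} = \V{\log w(X)}$.

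Next I would apply the standard decomposition $\V{\log w(X)} = \E{\log^2 w(X)} - (\E{\log w(X)})^2$, which is valid precisely when $\log^2 w(X)$ has finite $G$-expectation — the implicit finiteness hypothesis, matching the remark accompanying \cref{eq:is-log-ratio-var} in the main text, that also forces $\KL{G}{H} < \infty$. Finally, from the proof of \cref{prop:is-log-ratio-biased} we already have the mean identity $\E{\log w(X)} = -\KL{G}{H}$; squaring yields $(\E{\log w(X)})^2 = (\KL{G}{H})^2$, and substituting into the decomposition gives $\V{\log\tilde{w}(X)} = \E{\log^2 w(X)} - (\KL{G}{H})^2$, as claimed.

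There is essentially no real obstacle here beyond bookkeeping: the only point requiring care is confirming that the relevant second moment is finite so that the variance identity is meaningful, and this is exactly the standing assumption recorded in the main text. Everything else is translation-invariance of variance together with the already-established formula for $\E{\log w(X)}$.
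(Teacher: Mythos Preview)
Your proposal is correct and follows essentially the same route as the paper: both use the additive-constant decomposition $\log\tilde{w}(X) = \log(Z_h/Z_g) + \log w(X)$, invoke translation-invariance of variance, and then substitute $\E{\log w(X)} = -\KL{G}{H}$ into the standard second-moment formula. Your version is slightly more explicit about well-definedness (via \cref{prop:w-as-positive}) and the finiteness caveat, but the argument is otherwise identical.
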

\begin{proof}
Recall that for any real random variable $B$ and constant $c$,
$\mathrm{Var}[B] = \E{B^2} - (\E{B})^2$ and
$\mathrm{Var}[B+c] = \mathrm{Var}[B]$.
Applying this property to the random variable $\log \tilde{w}(X)$ gives
\begin{align}
&\mathrm{Var}\left[\log \tilde{w}(X) \right] \\
  &= \mathrm{Var}\left[\log \frac{Z_h}{Z_g} + \log w(X) \right] \\
  &= \mathrm{Var}\left[\log w(X) \right] \\
  &= \E{\log^2 w(X)} - \left(\E{\log w(X)} \right)^2 \\
  &= \E{\log^2 w(X)} - \left(-\KL{G}{H} \right)^2 \\
  &= \E{\log^2 w(X)} - \left(\KL{G}{H} \right)^2.
\end{align}
\end{proof}

\Cref{prop:is-log-ratio-var} establishes~\cref{eq:is-log-ratio-var}.
We next establish~\cref{eq:is-log-ratio-tail}.
\begin{proposition}
\label{prop:is-log-ratio-tail}
If $H \ll G$ and $X \sim G$, then
\begin{align}
\Pr\left[\log \tilde{w}(X) \ge t + \log(Z_h/Z_g)\right] \le e^{-t}
\end{align}
for any $t > 0$.
\end{proposition}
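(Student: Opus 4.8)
The statement is an immediate consequence of Markov's inequality applied to the nonnegative random variable $\tilde{w}(X)$, together with the unbiasedness identity already established in~\cref{prop:is-ratio-unbisaed}. First I would record that $\tilde{w}(X) \ge 0$ almost surely: on $\mathcal{G}'$ it equals $\tilde{h}(X)/\tilde{g}(X)$, a ratio of the nonnegative unnormalized densities $\tilde{h}, \tilde{g}$, and on $\mathcal{G}$ it is defined to be $1$ in~\cref{eq:w-unnormalized-ratio}. Next, since $H \ll G$, \cref{prop:is-ratio-unbisaed} gives $\E{\tilde{w}(X)} = Z_h/Z_g$, which is a finite positive number because $Z_h = \int \tilde{h}\,\nu(\diff x)$ and $Z_g = \int \tilde{g}\,\nu(\diff x)$ are the (finite, positive) normalizing constants from~\cref{prop:Zg-norm}.

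Then I would invoke Markov's inequality: for any threshold $c > 0$, $\Pr[\tilde{w}(X) \ge c] \le \E{\tilde{w}(X)}/c$. Choosing $c = e^t (Z_h/Z_g)$ for $t > 0$ yields
\begin{align*}
\Pr\!\left[\tilde{w}(X) \ge e^t (Z_h/Z_g)\right] \le \frac{Z_h/Z_g}{e^t (Z_h/Z_g)} = e^{-t},
\end{align*}
which is exactly~\cref{eq:is-log-ratio-tail} in its pre-logarithmic form.

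Finally I would translate this into the stated logarithmic form. Because $Z_h/Z_g > 0$, the map $s \mapsto e^t (Z_h/Z_g)$ is a fixed positive constant and $u \mapsto \log u$ is strictly increasing on $(0,\infty)$; hence for $u > 0$ the inequality $u \ge e^t(Z_h/Z_g)$ holds iff $\log u \ge t + \log(Z_h/Z_g)$. Moreover, wherever $\tilde{w}(X) = 0$ we have $\log \tilde{w}(X) = -\infty < t + \log(Z_h/Z_g)$, so the event $\{\log \tilde{w}(X) \ge t + \log(Z_h/Z_g)\}$ is contained in $\{\tilde{w}(X) > 0\}$ and there coincides with $\{\tilde{w}(X) \ge e^t(Z_h/Z_g)\}$. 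Combining this set equality with the Markov bound above gives $\Pr[\log \tilde{w}(X) \ge t + \log(Z_h/Z_g)] \le e^{-t}$, as claimed. I do not anticipate any real obstacle here; the only point requiring a word of care is the handling of the $\log$ on the null-looking set $\mathcal{G}$ and where $\tilde w$ vanishes, which the containment argument above resolves cleanly.
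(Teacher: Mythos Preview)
Your proof is correct and follows the same approach as the paper: invoke \cref{prop:is-ratio-unbisaed} for the mean, apply Markov's inequality to the nonnegative variable $\tilde{w}(X)$, then pass to logarithms. Your treatment of the set where $\tilde{w}$ may vanish is in fact slightly more careful than the paper's, which appeals to \cref{prop:w-as-positive} (whose hypothesis is $G \ll H$, not the $H \ll G$ assumed here); your containment argument sidesteps this cleanly.
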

\begin{proof}
\Cref{prop:is-ratio-unbisaed} gives $\E{\tilde{w}(X)} = Z_h/Z_g$.
Further, from~\cref{prop:w-as-positive} $\tilde{w}$ is $G$-almost surely positive.
Thus,
$\Pr\left[\tilde{w}(X) \ge e^t(Z_h/Z_g)\right] \le e^{-t}$
for $t > 0$.
Applying $\log$ to both sides as in~\cref{eq:is-log-ratio-tail}
completes the proof.
\end{proof}

\allowdisplaybreaks

\Cref{eq:is-log-ratio-ead} in the main text is a special case
of a result in~\citet[Appx.~B]{burda2016} to importance sampling
estimators that satisfy~\cref{prop:is-log-ratio-biased,prop:is-log-ratio-tail}.
For completeness, we give the proof in two stages using the notation in this paper below.

\begin{proposition}
Let $B$ be a real random variable with finite expectation and $\mu \defas \E{B}$
denote its expectation.
Then $\E{\lvert {B - \mu} \rvert} = 2\E{\max(0, B-\mu)}$.
\end{proposition}

\begin{proof}
By additivity of expectation
\begin{align}
\E{|B-\mu|}
=\begin{aligned}[t]
  & \E{|B-\mu| \cdot \mathbf{1}_{B > \mu}} \\
  &+ \E{|B-\mu| \cdot \mathbf{1}_{B < \mu}} \\
  & + \E{|B-\mu| \cdot \mathbf{1}_{B = \mu}}.
  \label{eq:addivity-expectation}
\end{aligned}
\end{align}
As the third term in the right hand side
of~\cref{eq:addivity-expectation} is zero, it suffices to establish
that the first two terms on the right hand side are equal:
\begin{align}
\mu &= \E{B\mathbf{1}_{B > \mu}} + \E{B\mathbf{1}_{B < \mu}} + \E{B\mathbf{1}_{B = \mu}} \\
\mu &= \E{B\mathbf{1}_{B > \mu}} \begin{aligned}[t]
  &+ \E{B\mathbf{1}_{B < \mu}} \\
  &+ \mu \E{(1-(\mathbf{1}_{B > \mu} + \mathbf{1}_{B < \mu}))}
  \end{aligned} \\
\mu &= \E{B\mathbf{1}_{B > \mu}} \begin{aligned}[t]
  &+ \E{B\mathbf{1}_{B < \mu}} + \mu \\
  & - \mu \E{\mathbf{1}_{B > \mu}} - \mu \E{\mathbf{1}_{B < \mu}}
  \end{aligned} \\
0 &= \E{(B -\mu)\mathbf{1}_{B > \mu}} - \E{(\mu-B)\mathbf{1}_{B < \mu}}.
\end{align}
Using the fact that $-X\mathbf{1}_{X < 0} = |X|\mathbf{1}_{X < 0}$
and $X\mathbf{1}_{X > 0} = |X|\mathbf{1}_{X > 0}$ a.s., for any random variable $X$,
we have
\begin{align}
\E{(\mu- B)\mathbf{1}_{B < \mu}} &= \E{(B -\mu)\mathbf{1}_{B > \mu}} \label{eq:additivity-expectation-pre-final} \\
\E{|\mu- B|\cdot\mathbf{1}_{B < \mu}} &= \E{|B -\mu|\cdot\mathbf{1}_{B > \mu}}. \label{eq:additivity-expectation-final}
\end{align}

Combining~\cref{eq:addivity-expectation,eq:additivity-expectation-pre-final,eq:additivity-expectation-final} gives.
\begin{align}
\E{|B-\mu|}
  &= 2\E{|B-\mu|\cdot \mathbf{1}_{B > \mu}} \\
  &= 2\E{(B-\mu)\cdot \mathbf{1}_{B > \mu}} \\
  &= 2\E{\max(0, B-\mu)}.
\end{align}
\end{proof}

\begin{proposition}
\label{prop:is-log-ratio-ead}
If $H$ and $G$ are mutually absolutely continuous, i.e., $G \ll H$, $H \ll G$,
and $X \sim G$, then
\begin{align}
\E{\lvert \log \tilde w(X) - \E{ \log \tilde w(X)} \rvert} \le 2 + 2\KL{G}{H}.
\end{align}
\end{proposition}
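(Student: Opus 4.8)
The plan is to combine the preceding proposition (which rewrites a mean absolute deviation as $2\E{\max(0, B - \mu)}$) with the tail bound of~\cref{prop:is-log-ratio-tail} and the bias identity of~\cref{prop:is-log-ratio-biased}. Set $B \defas \log \tilde w(X)$ for $X \sim G$. If $\KL{G}{H} = \infty$ the claimed inequality holds trivially, so I may assume $\KL{G}{H} < \infty$; then by~\cref{prop:is-log-ratio-biased} the mean $\mu \defas \E{B} = \log(Z_h/Z_g) - \KL{G}{H}$ is a finite real number (note $Z_h, Z_g \in (0,\infty)$), so the previous proposition applies and gives $\E{|B - \mu|} = 2\E{\max(0, B - \mu)}$.

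Next I would express $\E{\max(0, B - \mu)}$ via the layer-cake (tail-integral) representation of the expectation of a nonnegative random variable: $\E{\max(0, B - \mu)} = \int_0^\infty \Pr[B - \mu > s]\diff{s}$. Substituting $\mu = \log(Z_h/Z_g) - \KL{G}{H}$ rewrites the integrand as $\Pr[\log \tilde w(X) > \log(Z_h/Z_g) + (s - \KL{G}{H})]$. I then split the integral at $s = \KL{G}{H}$: on $[0, \KL{G}{H}]$ I bound the probability trivially by $1$, contributing at most $\KL{G}{H}$; on $(\KL{G}{H}, \infty)$ I put $t \defas s - \KL{G}{H} > 0$ and invoke~\cref{prop:is-log-ratio-tail} (using $\Pr[\,\cdot > \cdot\,] \le \Pr[\,\cdot \ge \cdot\,]$), giving $\Pr[\log \tilde w(X) > \log(Z_h/Z_g) + t] \le e^{-t}$, which contributes $\int_0^\infty e^{-t}\diff{t} = 1$. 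Hence $\E{\max(0, B - \mu)} \le \KL{G}{H} + 1$, and doubling yields $\E{|B - \mu|} \le 2 + 2\KL{G}{H}$. The finiteness of $\E{\max(0, B-\mu)}$ is an output of this computation, so there is no circularity in having applied the previous proposition.

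The argument is essentially routine; the one thing worth watching is the hypothesis bookkeeping, since~\cref{prop:is-log-ratio-biased} needs $G \ll H$ while~\cref{prop:is-log-ratio-tail} needs $H \ll G$ — it is exactly the assumed mutual absolute continuity that makes both ingredients simultaneously available (and, via~\cref{prop:w-as-positive}, ensures $\log\tilde w(X)$ is $G$-a.s.\ a finite real). The only genuinely delicate point is aligning the tail integral's shift by $\KL{G}{H}$ so that the exponential tail takes over precisely where the trivial bound stops being informative.
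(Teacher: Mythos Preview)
Your proposal is correct and follows essentially the same approach as the paper: both invoke the preceding proposition to reduce to $2\E{(B-\mu)_+}$, then combine the bias identity from~\cref{prop:is-log-ratio-biased} with the tail bound from~\cref{prop:is-log-ratio-tail} via a layer-cake integral. The only cosmetic difference is that the paper first uses subadditivity of the positive part, $(a+c)_+ \le (a)_+ + c$ for $c = \KL{G}{H} \ge 0$, and then applies the tail integral to the remaining piece, whereas you apply the tail integral first and split it at $s = \KL{G}{H}$; these are equivalent rearrangements of the same computation.
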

\begin{proof}
For any real random variable $B$, let $(B)_{+} \defas \max(0, B)$ denote
the positive part.
Then
\begin{align}
&\E{\lvert \log \tilde w(X) - \E{ \log \tilde w(X)} \rvert} \\
  &= 2\E{\left( \log \tilde{w}(X) - \E{ \log \tilde w(X)} \right)_{+}} \\
  &= \begin{aligned}[t]
    2\mathbb{E} \Big[ &\big( \log \tilde{w}(X) - \log(Z_h/Z_g)  \\
      &+ \log(Z_h/Z_g) - \E{ \log \tilde w(X)} \big)_{+} \Big]
    \end{aligned} \\
  &\le 2\mathbb{E} \begin{aligned}[t]\big[
    &\left( \log \tilde{w}(X) - \log(Z_h/Z_g)\right)_{+} \\
    &+ \left(\log(Z_h/Z_g) - \E{ \log \tilde w(X)} \right)_{+}
    \big]
    \end{aligned} \\
  &= \begin{aligned}[t]
    &2\E{\left( \log \tilde{w}(X) - \log(Z_h/Z_g)\right)_{+}} \\
    &+ 2\left(\log(Z_h/Z_g) - \E{ \log \tilde w(X)} \right)_{+}
    \end{aligned} \\
  &= \begin{aligned}[t]
    &2\E{\left( \log \tilde{w}(X) - \log(Z_h/Z_g)\right)_{+}} \\
    &+ 2\KL{G}{H}
    \end{aligned} \\
  &= \begin{aligned}[t]
    & 2\int_{0}^{\infty} \Pr[ \log \tilde{w}(X) - \log(Z_h/Z_g) > t ]\diff{t}\\
    & + 2\KL{G}{H}
    \end{aligned} \\
  &= \begin{aligned}[t]
    &2\int_{0}^{\infty} \Pr[ \log \tilde{w}(X) >  \log(Z_h/Z_g) + t ]\diff{t}\\
    &+ 2\KL{G}{H}
    \end{aligned} \\
  &\le 2\int_{0}^{\infty} \exp(-t)\diff{t}  + 2\KL{G}{H} \\
  &= 2 + 2\KL{G}{H}].
\end{align}
\end{proof}

\end{document}